\begin{document}

\title{A Service-Oriented Adaptive Hierarchical Incentive Mechanism for Federated Learning\\}
\titlerunning{Abbreviated paper title}
% If the paper title is too long for the running head, you can set
% an abbreviated paper title here
%
 \author{Jiaxing Cao 
 \and
 Yuzhou Gao \and
 Jiwei Huang}
 
 \authorrunning{F. Author et al.}
% First names are abbreviated in the running head.
% If there are more than two authors, 'et al.' is used.

 \institute{China University of Petroleum, Beijing }

\maketitle              % typeset the header of the contribution
\begin{abstract}
Recently, federated learning (FL) has emerged as a novel framework for distributed model training. In FL, the task publisher (TP) releases tasks, and local model owners (LMOs) use their local data to train models. Sometimes, FL suffers from the lack of training data, and thus workers are recruited for gathering data. To this end, this paper proposes an adaptive incentive mechanism from a service-oriented perspective, with the objective of maximizing the utilities of TP, LMOs and workers. Specifically, a Stackelberg game is theoretically established between the LMOs and TP, positioning TP as the leader and the LMOs as followers. An analytical Nash equilibrium solution is derived to maximize their utilities. The interaction between LMOs and workers is formulated by a multi-agent Markov decision process (MAMDP), with the optimal strategy identified via deep reinforcement learning (DRL). Additionally, an Adaptively Searching the Optimal Strategy Algorithm (ASOSA) is designed to stabilize the strategies of each participant and solve the coupling problems. Extensive numerical experiments are conducted to validate the efficacy of the proposed method.

\keywords{Federated Learning  \and DRL \and MAMDP \and Incentive Mechanism \and Stackelberg game \and Service-Oriented.}
\end{abstract}
\section{Introduction}
In recent years, with the rapid development of mobile computing, hundreds of thousands of data points are created on mobile terminals almost every moment\cite{gaff2014privacy}. Within the framework of traditional cloud computing, how to securely and efficiently upload vast amounts of data to central servers has become a significant challenge\cite{wu2014service}\cite{liu2024blockchain}.

Federated learning (FL) is an emerging computational paradigm\cite{li2020federated}. FL can be used in machine learning (ML)\cite{yang2019federated}, where different devices use distributed data to generate local models. After the local training process, local models are uploaded to the task publisher (TP) by the local model owner (LMO). A global model is aggregated through algorithms like FedAvg\cite{mcmahan2017communication}, FedProx\cite{li2020federated} and SCAFFOLD\cite{karimireddy2020scaffold}. This distributed training approach avoids privacy issues caused by uploading sensitive data to servers, while also reducing the communication costs of uploading large amounts of data.

In 2018, Google successfully developed its predictive keyboard input system\cite{hard2018federated}, which opened up a wide range of application scenarios for federated learning. Researchers also use federated learning for COVID-19 diagnosis\cite{zhang2021dynamic} and the development of autonomous driving technologies\cite{li2021privacy}. These examples demonstrate that federated learning possesses extensive application prospects.

Specifically, federated learning primarily involves two processes. First, the Task Publisher (TP) releases a task, then Local Model Owners (LMOs) conduct training locally and upload their local model parameters. This entire process repeats continuously until the model's loss function is minimized, indicating an optimal global model has been obtained. Throughout this process, a reasonable incentive mechanism needs to be designed to encourage maximum participation from LMOs in training tasks. An effective incentive mechanism should provide timely feedback based on LMO performance and continuously promote high-quality model training by LMOs\cite{liu2020systematic}. Ultimately, this allows both the TP and LMOs to achieve optimal utility. However, existing studies mainly focus on finding analytical solutions. In scenarios with limited data, how to incentivize heterogeneous LMOs and dynamically adjust the strategy has not been thoroughly investigated.

In this paper, we design a hierarchical game framework. In the upper layer, Task Publishers (TP) and Local Model Owners (LMO) engage in a Stackelberg game. In the lower layer, LMOs recruit workers to complete data collection tasks based on a Multi-Agent Markov Decision Process (MAMDP). Two main challenges exist: First, how to couple the upper layer (Stackelberg game) and the lower layer (MAMDP). Second, how to dynamically adjust strategies for the different capabilities of each LMO.

The main contributions of this paper are summarized as follows:

\begin{enumerate}
\item We propose a three-layer game framework that includes Task Publishers (TP), Local Model Owners (LMO), and workers. We use Stackelberg game theory to analyze the relationship between TP and LMOs. A Markov Decision Process is employed to describe the interaction between workers and LMOs.
\item In the upper layer, we identify the Nash equilibrium relationship among LMOs, which leads us to determine the Stackelberg equilibrium between the TP and LMOs. In the lower layer, using Deep Reinforcement Learning (DRL), we develop an optimal strategy that satisfies worker benefits, completing data collection tasks with maximum efficiency.
\item We propose the algorithm: Adaptively Searching the Optimal Strategy Algorithm (ASOSA), which addresses the coupling between the upper layer (Stackelberg game) and the lower layer (MAMDP) by dynamically adjusting the coupling factor. Under performing LMOs are promptly eliminated during the process, leading to a stable final strategy.
\item We first validate the performance of ASOSA using real-world datasets MNIST and Fashion-MNIST. We then compare it with other strategies. The results show that ASOSA rapidly responds to the performance of different LMOs and ultimately achieves a stable strategy. In comparative experiments, ASOSA significantly outperforms other baseline strategies.
\end{enumerate}
The rest of the paper is organized as follows: Section 2 reviews related works. Section 3 introduces the system model and problem formulation. Section 4 shows the equilibrium analysis between TP and LMOs. Section 5 addresses the MDP and system iteration algorithm ASOSA. Section 6 evaluates the performance of ASOSA and shows the comparison experiments. At the end of the paper, Section 7 sums up the whole paper and discusses the possibilities for future research.

% 创新点
% 使用迭代算法，以单位数据购买成本作为变量解决了上下层的耦合问题。
% 迭代过程中对不同能力的LMO自动调节分配机制，及时淘汰不适应的LMO。
% MADRL部分采用布尔变量作为动作函数，更好的模拟数据收集任务的真实场景。
% 用类sigmoid作为疲劳函数，更真实的反应疲劳水平和数据手机量之间的关系。
% end//

\section{Related Works}
\subsection{Federated Learning}
% 使用FL的原因 隐私……
% 已经有的应用 网络，车辆，交通……
Federated Learning (FL) is a distributed machine learning paradigm that allows multiple participants to collaboratively train shared models while protecting data privacy. The core idea is that data is kept local and only model parameters are transmitted, thus effectively mitigating data silos and privacy protection issues, while improving training efficiency and reducing the computational and storage burden on a single server \cite{rb1}. Compared with the traditional centralized training mode, federated learning shows significant advantages in scenarios with strong data sensitivity and high demand for distributed computing. It is widely used in many fields such as medical and healthcare \cite{rb2}, intelligent industry \cite{rb3}, and Internet of Vehicles \cite{rb4}, showing great application value and development potential.

In recent years, in order to further optimize the federated learning framework, many researchers have proposed optimization schemes. Wang et al.\cite{rb5} propose AIGC-augmented Federated Preference Learning (FPL) to train specific preference classes, enhancing data quality through pre-training and fine-tuning. Zhang et al.\cite{rb6} propose Double-Blind Collaborative Learning (DBCL), using random matrix sketching on model parameters to prevent gradient-based privacy inferences. 

In addition, the efficient operation of federated learning relies on the motivation of each participant, however, some nodes may lack sufficient motivation to participate due to factors such as computational cost and data quality differences. Therefore, how to design an effective incentive mechanism to encourage all parties to contribute fairly and enhance the overall training effect becomes an important problem to be solved in the field of federated learning.

\subsection{Incentive Mechanism Design}
% 缺少数据的解决方案……
% 分层框架……
For the design of incentive mechanisms for federated learning scenarios, one part of the research focuses on the design of incentive mechanisms for the architecture of TP-LMO \cite{rb10},\cite{rb11}, while the other part further considers the Worker layer and constructs effective incentive mechanisms applicable to the TP-LMO-Worker architecture \cite{rb15}. Whether it is a two-layer or three-layer architecture, the existing methods can be roughly categorized as follows: game-based \cite{rb10},\cite{rb11}, auction-based \cite{rb12},\cite{rb13},\cite{li2024dynamic}, contract-based \cite{rb14},\cite{rb15}, and reinforcement learning-based methods \cite{rb12},\cite{rb13}.

Zhang et al.\cite{rb10} propose a Stackelberg game-based incentive model that rewards clients based on model quality, solving two optimization problems with a linear complexity algorithm to achieve optimal solutions and reduce computational cost. Cho et al.\cite{rb11} model FL with multiple requesters and unconstrained workers as a Stackelberg game, proposing a method to find equilibrium, analyze interplays, and adapt to constraints while confirming requesters' first-mover advantage and potential mutual benefits. \cite{rb12} proposes a two-layer Reinforcement Auction Mechanism (RAM): the upper layer uses Hybrid Action Reinforcement Learning for user selection and payments, while the lower layer optimizes resource allocation for utility maximization. \cite{rb13} proposes a long-term adaptive VCG auction mechanism with a multi-branch deep reinforcement learning (DRL) algorithm to address FL's incentive challenges. \cite{rb14} proposes an FL-based architecture for privacy-preserving collaborative learning among DaaS providers in IoT applications. To address incentive mismatches and information asymmetry, it applies contract theory with a self-disclosure mechanism for truthful UAV capability reporting while optimizing the model owner's profits. Ding et al.\cite{rb15} derive optimal contracts and pricing mechanisms under three interaction structures and propose multi-period pricing to simplify the dynamic pricing.

Unlike previous research, we separately discuss the theoretically optimal strategy generated through TP and LMOs, as well as the actual values produced by LMOs and workers. We address the coupling problem between these two through our algorithm design. We focus particularly on how the unit data acquisition cost affects strategies at both the upper and lower layers. We conduct the following work to develop an adaptive algorithm based on different LMO capabilities.

\section{System Model and Problem Formulation}
In this section, we first introduce the framework of FL with a two-layer game. Next we formulate an optimization problem to maximize the utility functions of TP, LMOs and workers. For convenience, the key notations used are summarized in Table I.

\begin{figure}[htbp]    % 常规操作\begin{figure}开头说明插入图片
  \centering            % 前面说过，图片放置在中间
{
      \label{fig:subfig1}\includegraphics[width=0.9\textwidth]{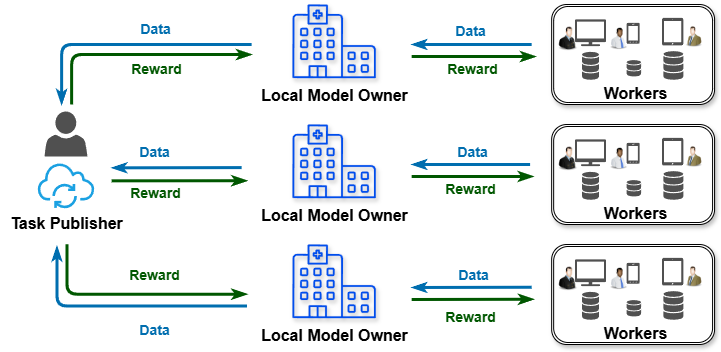}
  }
  \caption{The system framework.}    
  \label{fig:subfig_1}            
\end{figure}

\begin{table}[tb]
    \caption{Main Notations}
    \renewcommand{\arraystretch}{1.3}
    \begin{center}
    \rowcolors{2}{gray!15}{white}
    \begin{tabularx}{\linewidth}{>{\raggedright\arraybackslash}p{2.5cm} >{\raggedright\arraybackslash}X}
        \rowcolor{gray!35}
        \textbf{Notation} & \textbf{Explanation} \\
        $\tau$ & Total budget of the task publisher (TP) \\
        $M$ & Number of local model owners (LMOs) \\
        $\mathbb{Z}$ & Total data collected from all LMOs \\
        $\zeta_m$ & Private dataset of LMO $m$ \\
        $B_m$ & Budget of LMO $m$ \\
        $B_m^*$ & Optimal budget of LMO $m$ \\
        $\tilde{B}_m$ & Remaining budget of LMO $m$ \\
        $P_m$ & Unit data purchase cost of LMO $m$ \\
        $W_m$ & Number of workers under LMO $m$ \\
        $d_m^i$ & Data contribution of worker $i$ under LMO $m$ \\
        $f(d_m^i)$ & Fatigue function of worker $i$ \\
        $\lambda$ & Conversion coefficient parameter \\
    \end{tabularx}
    \label{tab:notations}
    \end{center}
\end{table}

\subsection{FL Framework with Two-layered Game}
As shown in Fig.1, this framework involves three types of characters: TP, LMOs, and workers. In the upper layer, the TP announces model training tasks and budgets, which attract LMOs to participate in the task to earn the reward. In the lower layer, to complete local model training, LMOs usually need data provided by workers. The workers collect data and sell it to the LMOs.

In the upper layer, LMOs are independent of each other in our framework. During the interaction between LMOs and TP, LMOs calculate the most suitable data required to train their local model and gather these data on their own. That means data does not need to be pooled or shared on the central server. This design allows LMOs to train models independently using local data, improving data privacy and the specificity of the models.

In the lower layer, interactions take place between the LMO and the workers. The primary responsibility of the LMO in this layer is to recruit workers for data collection. The data collected by the workers forms a dataset that is exclusively accessible to the LMO who hired them. In our framework design, data is typically scarce and valuable. Therefore, we give the decision-making authority to the workers. From the workers' perspective, they need to strategically decide whether to participate in data collection to maximize their own benefits.

Overall, this architectural design ensures data privacy, while avoiding latency during data transmission and minimizing additional transmission costs. It enables the training tasks to be completed efficiently and safely.

\subsection{Problem Formulation}
We denote $\tau$ to represent the total budget of the task publisher in the training task. Denote ${M}$ = {1,2,3..., $\mathcal{M}$} as the set of LMOs who want to participate in the training task. Each $LMO_m$ $(m\in {M})$ has the private dataset $\zeta_m$ collected by the workers. The total data of all LMOs is $\mathbb{Z} = \sum_{m\in {M}}^{}\zeta_m $.

    \subsubsection{Utility of TP}
    $\tau$ represents the total budget that is published by the task publisher. Similar to the previous studies\cite{zhao2023multi}, the utility function of TP is dependent on $\tau$ and the total data volume $\mathbb{Z}$, which is expressed as follows:
    \begin{equation}
    \mathcal{U}(\tau)=\lambda g(\mathbb{Z})-\tau\label{1}
    \end{equation}
    where, $\lambda$ is a system parameter, which represents the conversion coefficient between model accuracy and revenue.$g(\mathbb{Z})$ is a function of model accuracy with respect to the total number of data samples $\mathbb{Z}$. According to related works, as the data volume increases, the model's benefits exhibit diminishing marginal returns. Consistent with related studies\cite{huang_hierarchical_2024}, the expression for $g(\mathbb{Z})$ is given as follows:
    \begin{equation}
    g(\mathbb{Z}) = \alpha ln(1+\beta \mathbb{Z})\label{2}
    \end{equation}
    where $\alpha$ and $\beta$ are system parameters, their value ranges will be given respectively in the following sections.
    
    \subsubsection{Utility of LMO}
    In our structure, LMOs train the local models for profit with their local datasets. We assume that the TP allocates a budget to each $LMO_m$ based on their respective data contributions. The budget of $LMO_m$ is:
    \begin{equation}
    B_m =\frac{\zeta_m}{\mathbb{Z}}\tau\label{3}
    \end{equation}
    For $LMO_m$, the primary cost is the unit data purchase cost $P_m$ paid to the workers for data collection. Additional costs include communication and computation expenses. To simplify the notation, we define $E_m$ as the total cost excluding the data purchase cost, which is a specific coefficient associated with each $LMO_m$. Thus, the utility function of $LMO_m$ is expressed as follows:
    \begin{equation}
    \mathcal{U}_m(\zeta_m,\tau,\mathbb{Z}) = B_m -P_m \zeta_m - E_m\label{4}
    \end{equation}
    \subsubsection{Utility of Worker}  
    For workers, we define $d_m^i$ as the amount of data collected by $worker_i$ for $LMO_m$. The number of workers of $LMO_m$ is $W_m ={1,2...,w_m}$. The fatigue function $f(d_m^i)$ reflects the relationship between the worker's fatigue level and their data contribution. Considering the reality of the situation, a worker's fatigue level does not increase linearly with the amount of collected data. Instead, fatigue accumulates gradually at first, then rises rapidly and finally slows down. Based on this observation, we adopt a specially designed sigmoid function:
    \begin{equation}
    f(d_m^i)=\frac{\epsilon}{1+e^{-(\gamma(d_m^i - \delta)-0.5)}}\label{5}
    \end{equation}
    where, $\gamma$, $\epsilon$ and $\delta$ are parameters to control the rate and range at which fatigue levels increase. In order to increase the willingness of workers to contribute data, we provide a base participation reward:
    \begin{equation}
        R_{base}(\tilde{B}_m,B_m)=\theta(1-\frac{\tilde{B}_m}{B_m})
    \end{equation}
    where $\theta$ is a parameter and $\tilde{B}_m$ represents the residual budget of $LMO_m$. This reward gradually increases as the game progresses. Therefore, the utility function of the worker will be:
    \begin{equation}
    \mathcal{U}_m^i(d_m^i,B_m,\tilde{B}_m) = \frac{d_m^i}{\sum_{i\in w_m}d_m^i}{B}_m+R_{base}-f(d_m^i)\label{6}
    \end{equation}
    
    \subsubsection{Optimization Problems}
    In our framework, the TP must determine the total budget to attract LMOs to participate in the training task, thereby ensuring the acquisition of a high-quality model. LMOs need to determine the optimal data contribution level to minimize costs while completing local training effectively. For workers, the challenge lies in learning how to strategically participate in LMO data collection tasks to maximize their own utility. Overall, the core problem of the entire framework can be formulated as follows:
    \begin{equation}
    TP: Maximize \mathcal\ {U}(\tau)
    \end{equation}
    \begin{equation}
    LMO:Maximize\ \mathcal{U}_m(\zeta_m,\tau,\mathbb{Z})
    \end{equation}
    \begin{equation}
    Worker:Maximize\ \mathcal{U}_m^i(d_m^i,B_m,\tilde{B}_m)
    \end{equation}
    
\section{Equilibrium Analysis}

    \subsection{Strategy of LMO}
    For each $LMO_m$, the key decision is determining the optimal data contribution $\zeta_m$. Each LMO aims to maximize its own utility. However, since LMOs tend to compete with each other and they are unwilling to share data (e.g. hospitals and banks), their interactions can be viewed as a non-cooperative game. The optimal strategies of the LMOs will naturally lead to a Nash equilibrium. Based on previous studies\cite{huang_hierarchical_2024}, we derive Definition~\ref{definition} as follows:
    
    \begin{definition}\label{definition}
        During interaction between LMOs, there exists a Nash Equilibrium $\zeta^* = (\zeta_1^*, \zeta_2^*, \dots, \zeta_\mathcal{M}^*)$ of the Stackelberg game, 
        \[ \mathcal{U}_m(\zeta_m^*,\zeta_{-m}^*)\geq \mathcal{U}_m(\zeta_m,\zeta_{-m}^*).\]
    \end{definition}
    
    $\mathbb{Z}$ is the total data contribution of system, so we define $\mathbb{Z}$ as:
    \begin{equation}
        \mathbb{Z} = \sum_{n=1}^{M}\zeta_n
    \end{equation}

    Next, Theorem 1 will determine the optimal value of $\mathbb{Z}^*$ through Nash equilibrium conditions.
    
    \begin{theorem}\label{theorem2}
        The optimal strategy of total data contribution $\mathbb{Z}$ is: 
        \begin{equation} \label{10}
            \mathbb{Z}^*  = \frac{(M-1)\tau }{\sum\limits_{n\in M}P_n}
        \end{equation}
    \end{theorem}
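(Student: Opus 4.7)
The plan is to derive the Nash equilibrium condition from the LMOs' best-response first-order conditions, then sum those conditions across all players to isolate $\mathbb{Z}$. Since each LMO $m$ plays $\zeta_m$ against the aggregate $\mathbb{Z}=\zeta_m+\sum_{n\neq m}\zeta_n$, I will treat $S_{-m}:=\sum_{n\neq m}\zeta_n$ as fixed when differentiating $\mathcal{U}_m$, and rely on the payoff structure from \eqref{3} and \eqref{4}, which gives $\mathcal{U}_m = \tau\zeta_m/\mathbb{Z} - P_m\zeta_m - E_m$.

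First I would compute $\partial \mathcal{U}_m/\partial \zeta_m = \tau S_{-m}/(\zeta_m+S_{-m})^2 - P_m$ and verify strict concavity via $\partial^2 \mathcal{U}_m/\partial \zeta_m^2 = -2\tau S_{-m}/\mathbb{Z}^3 < 0$, so each LMO has a unique interior best response. Setting the first-order condition to zero and writing $S_{-m}=\mathbb{Z}-\zeta_m$ yields the equilibrium characterization
\begin{equation*}
\tau\,\frac{\mathbb{Z}-\zeta_m}{\mathbb{Z}^2} \;=\; P_m, \qquad m=1,\dots,\mathcal{M}.
\end{equation*}

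Second, I would sum this identity over $m\in M$. The left side gives $(\tau/\mathbb{Z}^2)\bigl(M\mathbb{Z}-\sum_m\zeta_m\bigr)=(\tau/\mathbb{Z}^2)(M-1)\mathbb{Z}=(M-1)\tau/\mathbb{Z}$, while the right side gives $\sum_n P_n$. Solving $(M-1)\tau/\mathbb{Z}=\sum_n P_n$ for $\mathbb{Z}$ yields $\mathbb{Z}^\ast=(M-1)\tau/\sum_n P_n$, matching \eqref{10}.

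The main obstacle I anticipate is not the algebra, which is routine, but justifying that the aggregated first-order condition genuinely characterizes the equilibrium value of $\mathbb{Z}$: I need the individual equilibrium $\zeta_m^\ast = \mathbb{Z}^\ast - P_m(\mathbb{Z}^\ast)^2/\tau$ to be non-negative for every $m$, i.e.\ $P_m\mathbb{Z}^\ast\le\tau$, which translates into a feasibility condition on the heterogeneity of the $P_m$'s relative to $\tau$. I would state this as the domain assumption under which the interior Nash equilibrium exists, note that strict concavity plus the existence of an interior fixed point of the best-response map gives uniqueness, and conclude that the closed-form expression in \eqref{10} is the unique equilibrium aggregate data contribution.
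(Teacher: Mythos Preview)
Your proposal is correct and follows essentially the same route as the paper: compute $\partial\mathcal{U}_m/\partial\zeta_m=\tau(\mathbb{Z}-\zeta_m)/\mathbb{Z}^2-P_m$, verify concavity via the second derivative, set the first-order condition to zero, and sum over $m$ to isolate $\mathbb{Z}$. The only cosmetic difference is that the paper first rewrites each first-order condition as $\zeta_m^\ast=\mathbb{Z}-P_m\mathbb{Z}^2/\tau$ and then sums, whereas you sum the conditions directly; your additional remarks on treating $S_{-m}$ as fixed and on the non-negativity constraint $\zeta_m^\ast\ge 0$ are more careful than the paper itself.
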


    \begin{proof}
        First, we can obtain the first and second order derivative of the utility of $LMO_m$ with respect to $\zeta_m$:
        \begin{equation}
             \frac{\partial \mathcal{U}_m}{\partial \zeta_m}=\tau\left ( \frac{\mathcal{Z}-\zeta_m}{\mathcal{Z}^2} \right )-P_m
        \end{equation}
        \begin{equation}
            \frac{\partial^2 \mathcal{U}_m}{\partial \zeta^2}=-\frac{2\tau(\mathbb{Z}-\zeta_m)}{\mathbb{Z}^3}
        \end{equation}
        
        Evidently, Eq.(14) is less than zero. Therefore, a Nash equilibrium must exist for this Stackelberg game. The optimal strategy $\zeta_m$ can be derived through the following process. 
        \begin{equation}
            \frac{\partial \mathcal{U}_m}{\partial \zeta_m}=\tau\left ( \frac{\mathcal{Z}-\zeta_m}{\mathcal{Z}^2} \right )-P_m=0
        \end{equation}
        So that we can obtain the optimal strategy of $LMO_m$:
        \begin{equation}
            \zeta_m^*=\mathbb{Z}-\frac{P_m \mathbb{Z}^2}{\tau}
        \end{equation}

        Base on the Definition 1, the total optimal data contribution of all LMOs can be calculate:
        \begin{numcases}{}
        \zeta_1^*= \mathbb{Z}-\frac{P_1 \mathbb{Z}^2}{\tau}, \nonumber \\
        \zeta_2^*= \mathbb{Z}-\frac{P_2 \mathbb{Z}^2}{\tau}, \nonumber \\
        \hfill \vdots \hfill \nonumber \\
        \zeta_m^*= \mathbb{Z}-\frac{P_m \mathbb{Z}^2}{\tau} \nonumber
    \end{numcases}
    
    By summing up, we obtain the equation of $\mathbb{Z}$:
    \begin{equation}
        \mathbb{Z}=\sum_{n=1}^{M}\zeta_n=M\mathbb{Z}-\frac{\mathbb{Z}^2}{\tau}\sum\limits_{n\in M}P_n
    \end{equation} 
    By solving the Eq.(17), Eq.(12) can be derived. 
    \end{proof}

    Substituting Eq.(12) into Eq.(16), we can derive the optimal strategy for the $LMO_m$:
    \begin{equation}
    \zeta_m^* =  \frac{(M-1)\tau }{\sum_{n\in M}P_{n}}\left ( 1-\frac{(M-1)P_{m}}{\sum_{n\in M}P_{n}} \right )
    \end{equation}
    
    \subsection{Strategy of TP}
    The TP's optimal strategy is setting a budget $\tau$ that maximizes its utility. The optimal $\tau$ must be determined based on the strategies of the LMOs. By selecting the optimal $\tau$, the TP can achieve its maximum utility while the LMOs reach a Nash equilibrium among themselves. Consequently, there exists a Stackelberg equilibrium between the TP and the multiple LMOs. Therefore, we can get the optimal $\tau$ as follows.
    
    \begin{lemma}\label{lemma3}
        % When the TP's utility reaches the optimal value, if and only $\frac{\partial \mathcal{U} (\tau )}{\partial \tau }=0$.
        There exists a Stackelberg equilibrium between TP and multiple LMOs. 
    \end{lemma}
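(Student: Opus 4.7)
The plan is to follow the standard backward-induction route for a Stackelberg game: take the followers' best-response as already characterized in Theorem~\ref{theorem2}, substitute it into the leader's utility, and show that the resulting single-variable optimization in $\tau$ admits a unique maximizer. Since Theorem~\ref{theorem2} already gives $\mathbb{Z}^{*}(\tau)=\frac{(M-1)\tau}{\sum_{n\in M}P_{n}}$ as a linear, well-defined function of $\tau$, the existence question reduces to analyzing $\mathcal{U}(\tau)=\lambda\alpha\ln\!\bigl(1+\beta\,\mathbb{Z}^{*}(\tau)\bigr)-\tau$ on $\tau>0$.

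First, I would let $C=\frac{M-1}{\sum_{n\in M}P_{n}}$ so that $\mathbb{Z}^{*}(\tau)=C\tau$, and rewrite the leader's objective as $\mathcal{U}(\tau)=\lambda\alpha\ln(1+\beta C\tau)-\tau$. Next, I would compute the first and second derivatives,
\begin{equation}
\frac{\partial \mathcal{U}}{\partial \tau}=\frac{\lambda\alpha\beta C}{1+\beta C\tau}-1,\qquad
\frac{\partial^{2}\mathcal{U}}{\partial \tau^{2}}=-\frac{\lambda\alpha(\beta C)^{2}}{(1+\beta C\tau)^{2}}.
\end{equation}
The second derivative is strictly negative for all $\tau\ge 0$ (given the standard assumptions $\alpha,\beta,\lambda>0$ and $M\ge 2$), so $\mathcal{U}(\tau)$ is strictly concave. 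Setting the first derivative to zero then yields a unique stationary point
\begin{equation}
\tau^{*}=\lambda\alpha-\frac{1}{\beta C}=\lambda\alpha-\frac{\sum_{n\in M}P_{n}}{\beta(M-1)},
\end{equation}
which by strict concavity is the global maximizer (provided the parameter regime ensures $\tau^{*}>0$; I would state this as the admissibility condition on $\alpha,\beta,\lambda$).

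Finally, I would assemble the Stackelberg equilibrium as the pair $(\tau^{*},\boldsymbol{\zeta}^{*}(\tau^{*}))$, where $\boldsymbol{\zeta}^{*}(\tau^{*})$ is obtained by plugging $\tau^{*}$ into Eq.~(18). Because $\boldsymbol{\zeta}^{*}(\tau)$ is a Nash equilibrium among the LMOs for every fixed $\tau$ (Theorem~\ref{theorem2}) and $\tau^{*}$ uniquely maximizes $\mathcal{U}(\tau)$ against that best-response, the pair satisfies the Stackelberg equilibrium conditions by definition, completing the proof.

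The main obstacle I anticipate is not the calculus, which is routine, but rather the admissibility of $\tau^{*}$: one must guarantee $\tau^{*}>0$ and that the induced $\mathbb{Z}^{*}$ remains in a feasible range (e.g., not exceeding the aggregate data the workers can plausibly supply). I would therefore conclude the proof by stating the parameter conditions on $\alpha$, $\beta$, $\lambda$, and $\{P_n\}$ under which the interior maximizer lies in the admissible domain; otherwise, the equilibrium degenerates to a boundary solution $\tau^{*}=0$, which is consistent with the leader declining to publish a task.
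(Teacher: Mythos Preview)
Your proposal is correct and follows essentially the same route as the paper: substitute the followers' best response $\mathbb{Z}^{*}(\tau)$ from Theorem~\ref{theorem2} into the leader's utility, compute the second derivative to establish strict concavity, and conclude that the first-order condition yields a unique maximizer. The paper keeps the derivatives in the abstract form $\lambda g'(\mathbb{Z}^{*})\sum_{n}\partial\zeta_{n}^{*}/\partial\tau$ and appeals to the concavity of $g$, whereas you specialize immediately to $g(\mathbb{Z})=\alpha\ln(1+\beta\mathbb{Z})$ and carry out the computation explicitly; you also fold in the explicit formula for $\tau^{*}$ and the admissibility condition $\tau^{*}>0$, which the paper defers to Theorem~\ref{theorem3} and the later constraint analysis, but this is a matter of packaging rather than a different argument.
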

    \begin{proof}
        We can derive the first-order and second-order derivatives of $\mathcal{U}(t)$:
        \[\frac{\partial \mathcal{U} (\tau )}{\partial \tau }=\lambda g^{'}(\mathbb{Z}^*) \cdot \sum_{n\in M}^{}\frac{\partial \zeta_n^*}{\partial \tau} -1\]
        \[\frac{\partial^2 \mathcal{U}(\tau)}{\partial \tau^2}=\lambda g^{''}(\mathbb{Z}^*)\left [ \sum_{n\in M}^{} \frac{M-1}{\sum_{m\in M}^{}
        p_m}\left ( 1-\frac{(M-1)P_n}{\sum_{m\in M}^{}P_m} \right ) \right ]^2\]
        We have known that $g(\mathbb{Z}^*)$ is a concave function through Eq.\eqref{2}. Therefore $\frac{\partial^2 \mathcal{U}(\tau)}{\partial \tau^2}<0$. Thus if and only $\frac{\partial \mathcal{U} (\tau )}{\partial \tau }=0$, TP's utility reaches the maximum with the optimal $\tau$. Therefore the Stackelberg equilibrium must exists between TP and LMOs.
    \end{proof}

    \begin{theorem}\label{theorem3}
        For TP, its optimal strategy is:
        \begin{equation}\label{12}
            \tau=\lambda \alpha -\frac{\sum_{n\in M}^{}P_n}{\beta (M-1)}
        \end{equation}
    \end{theorem}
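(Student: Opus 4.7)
The plan is to exploit the first-order condition established in Lemma~2 and substitute in the explicit Nash-equilibrium expressions from Theorem~1. Since Lemma~2 already establishes strict concavity of $\mathcal{U}(\tau)$ and the existence of a Stackelberg equilibrium, the theorem reduces to computing the unique stationary point of $\mathcal{U}(\tau)$ in closed form.

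First, I would write out the stationarity condition from Lemma~2,
\[
\lambda\, g'(\mathbb{Z}^*)\sum_{n\in M}\frac{\partial \zeta_n^*}{\partial \tau} \;=\; 1,
\]
and evaluate each factor separately. From $g(\mathbb{Z})=\alpha\ln(1+\beta\mathbb{Z})$ in Eq.~\eqref{2}, the marginal-utility factor is immediately $g'(\mathbb{Z}^*)=\alpha\beta/(1+\beta\mathbb{Z}^*)$.

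Second, I would evaluate the sum $\sum_{n\in M}\partial\zeta_n^*/\partial\tau$. The cleanest route is to avoid differentiating Eq.~(18) term by term: by Theorem~1, $\mathbb{Z}^*=\sum_{n} \zeta_n^* = (M-1)\tau/\sum_{n\in M} P_n$ is linear in $\tau$, so
\[
\sum_{n\in M}\frac{\partial \zeta_n^*}{\partial \tau} \;=\; \frac{d\mathbb{Z}^*}{d\tau} \;=\; \frac{M-1}{\sum_{n\in M}P_n}.
\]
(If one prefers to differentiate Eq.~(18) directly, the $P_m$-dependent terms telescope to $(M-1)\sum_m P_m/\bigl(\sum_n P_n\bigr)^2$ and cancel against the $M$ leading terms, leaving the same quantity.)

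Third, I would substitute both factors back into the stationarity condition to obtain
\[
\lambda\alpha\beta\cdot\frac{M-1}{\sum_{n\in M}P_n} \;=\; 1+\beta\mathbb{Z}^* \;=\; 1+\beta\cdot\frac{(M-1)\tau}{\sum_{n\in M}P_n},
\]
and solve the resulting linear equation for $\tau$ by multiplying through by $\sum_{n\in M}P_n/\bigl(\beta(M-1)\bigr)$, which yields exactly Eq.~\eqref{12}. There is no real obstacle here; the only delicate point is making sure $\sum_n \partial\zeta_n^*/\partial\tau$ is handled correctly, and this is why routing through $\mathbb{Z}^*=K\tau$ is preferable to brute-force differentiation of Eq.~(18).
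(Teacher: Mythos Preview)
Your proposal is correct and follows essentially the same approach as the paper: set the first-order condition from Lemma~2 to zero, substitute $g'(\mathbb{Z}^*)=\alpha\beta/(1+\beta\mathbb{Z}^*)$ and $\sum_n\partial\zeta_n^*/\partial\tau=(M-1)/\sum_n P_n$, then use Eq.~(12) to solve for $\tau$. Your observation that $\sum_n\partial\zeta_n^*/\partial\tau=d\mathbb{Z}^*/d\tau$ via the linearity of $\mathbb{Z}^*$ in $\tau$ is a clean way to justify a step the paper leaves implicit.
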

    \begin{proof}
    if:
        \[\frac{\partial \mathcal{U} (\tau )}{\partial \tau }=\lambda g^{'}(\mathbb{Z}^*) \cdot \sum_{n\in M}^{}\frac{\partial \zeta_n^*}{\partial \tau} -1 =0\]
    then:
        \[\lambda \cdot \frac{\alpha \beta }{1+\beta \mathbb{Z}^*} \cdot \frac{(M-1) }{\sum\limits_{n\in M}P_n}=1\]
    base on Eq.(12), we can solve the optimal $\tau$.
    \end{proof}
    
    \subsection{The Equilibrium Condition of System}
    In the game between the TP and LMOs, the TP adjusts$\tau$ to maximize its own utility, while each LMO selects its data contribution level $\zeta_m$ to maximize its own utility. Ultimately, this interaction forms a Stackelberg game. The utilities $\mathcal\ {U}(\tau)$ and $\mathcal{U}_m(x_m,\tau,\mathbb{Z})$ reach their respective maxima when the following conditions are satisfied:

    \begin{numcases}{}
        \mathbb{Z}^* =\frac{(M-1)\tau }{\sum_{n\in M}P_n} \tag{12}\\
        \zeta_m^* =  \frac{(M-1)\tau }{\sum_{n\in M}P_{n}}\left ( 1-\frac{(M-1)P_{m}}{\sum_{n\in M}P_{n}} \right )\tag{18}\\ 
        \tau=\lambda \alpha -\frac{\sum_{n\in M}^{}P_n}{\beta (M-1)}\tag{19}
    \end{numcases}
    
    Substituting equation Eq.(19) into equations Eq.(12) and Eq.(18), we can derive the relationship between $\mathbb{Z}^*$ and $x_m$ with respect to $\tau$ and $C_m$:
    
    \begin{numcases}{}
        \zeta_m^* =  \frac{\tau [(\lambda \alpha -\tau )\beta-P_m]}{[(\lambda \alpha -\tau )\beta ]^2} \\ 
        \mathbb{Z}^* = \frac{\tau }{(\lambda \alpha -\tau )\beta }
    \end{numcases}
    
    \subsection{Constraint Analysis}
    After the Nash equilibrium is formed between the LMO and TP, the LMO's budget is established accordingly. To enable the LMO to complete the data collection task, we need to accurately assess the range of the LMO's budget. This is crucial in the final algorithm iteration process. In Definition 2, we provided the expression for the LMO's budget $B_m$. Based on this expression, we thoroughly demonstrate the reasonable value range of $B_m$ in Theorem 3.
    \begin{definition}
    The budget of $LMO_m$ is defined as:
    \[ B_m = \frac{\zeta_m^*}{\mathbb{Z}^*}\tau, (B_m> 0) .\]
    \end{definition}
    Substituting equation Eq.(20) and Eq.(21), we can obtain:
    \begin{equation} \label{15}
        B_m = \left [ 1-\frac{P_m}{(\lambda \alpha -\tau )\beta } \right ]\tau 
    \end{equation}
    We give the range of $P_m$ to determine the range of $B_m$. Lemma 2 can derive the range of $P_m$, and the range of $B_m$ will be given in the Theorem 4.
    
    For any $m\in M$, we have $P_m\in (0,(\lambda \alpha -\tau )\beta )$.
        We consider Eq.(20) as a function of $\zeta_m$ with respect to $P_m$. And the first order derivative is given by:
            $\frac{\mathrm{d} \zeta ^*_m}{\mathrm{d} P_m}=-\frac{\tau}{[(\lambda \alpha -\tau )\beta ]^2}$
        Clearly, the first order derivative is less than zero. And Eq.(20) has a zero point located at $(\lambda\alpha-\tau)\beta$. Given that $\zeta_m$ is greater than zero, thus the range of $P_m$ less than $(\lambda\alpha-\tau)\beta$. Because $P_m$ have to more than zero, the range of $P_m$ can be obtain $P_m\in (0,(\lambda \alpha -\tau )\beta )$.

    \begin{theorem}
        For each $m\in M$, the range of $B_m$ is $(0,\tau)$.
    \end{theorem}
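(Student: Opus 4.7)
The plan is to leverage the earlier characterization $B_m = \bigl[1 - \tfrac{P_m}{(\lambda\alpha - \tau)\beta}\bigr]\tau$ together with the bound $P_m \in (0,(\lambda\alpha-\tau)\beta)$ established in the preceding lemma, and then treat $B_m$ simply as an affine function of the parameter $P_m$. Once $B_m$ is viewed this way, the range claim reduces to evaluating the endpoints and confirming monotonicity.

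First I would differentiate $B_m$ with respect to $P_m$, obtaining $\frac{\partial B_m}{\partial P_m} = -\tfrac{\tau}{(\lambda\alpha-\tau)\beta}$. Since $\tau > 0$ and $(\lambda\alpha-\tau)\beta > 0$ (the latter follows from the admissibility of $P_m$ already assumed), this derivative is strictly negative, so $B_m$ is a strictly decreasing function of $P_m$ on the feasible interval.

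Next I would compute the two one-sided limits at the endpoints of $P_m \in (0,(\lambda\alpha-\tau)\beta)$. As $P_m \to 0^+$, the bracket $[\,1 - \tfrac{P_m}{(\lambda\alpha-\tau)\beta}\,]$ tends to $1$, giving $B_m \to \tau$. As $P_m \to (\lambda\alpha-\tau)\beta^{-}$, the bracket tends to $0$, giving $B_m \to 0$. Combining monotonicity with these limits yields $B_m \in (0,\tau)$; the openness of both endpoints is inherited from the open interval for $P_m$, and the positivity $B_m > 0$ is already stipulated in Definition~2, which is consistent with the lower limit.

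I do not anticipate a genuine obstacle here: the statement is essentially a corollary of the preceding lemma on $P_m$'s range combined with the closed-form expression~\eqref{15}. The only subtlety worth flagging is making sure that $(\lambda\alpha-\tau)\beta > 0$ is justified from the parameter assumptions (otherwise the domain of $P_m$ itself would be empty), but since the lemma already presumes this positivity to state its interval, one may invoke it directly without additional work.
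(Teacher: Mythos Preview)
Your proposal is correct and follows essentially the same approach as the paper: treat $B_m$ as a function of $P_m$ via Eq.~\eqref{15}, compute the derivative $\frac{dB_m}{dP_m}=-\frac{\tau}{(\lambda\alpha-\tau)\beta}<0$, and use the admissible interval $P_m\in(0,(\lambda\alpha-\tau)\beta)$ from the preceding lemma to read off the range $(0,\tau)$. Your version is in fact slightly more explicit than the paper's, since you spell out the endpoint limits and justify the sign of the denominator, but the argument is the same.
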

    \begin{proof}
        Similar to LEMMA 2, we consider Eq.(22) as a function of $B_m$ with respect to $P_m$. The first order derivative is :
        \[ \frac{\mathrm{d} B_m}{\mathrm{d} P_m}=-\frac{\tau}{(\lambda \alpha -\tau )\beta }\].
        The domain of function $B_m(P_m)$ is defined as: $P_m\in (0,(\lambda \alpha -\tau )\beta )$. Because of non-negativity, we can obtain the range of $B_m$ as $(0,\tau)$.
    \end{proof}
    
\section{Markov Decision Process and Algorithm Design}
In this section, firstly, we focus on the interaction between the LMO and workers. Secondly, we introduce algorithms that couple all levels of the system. We describe the LMO and workers as a multi-agent Markov decision process, and explain how to solve this problem by using DRL. Finally, we propose algorithms that can couple the global system and explain the rationale behind the algorithm settings and the specific procedures.
    
    \subsection{Markov Decision Process}
    In our proposed three-layer game framework, the interaction between LMOs and the TP leads to a Stackelberg equilibrium described  above, which seeks to maximize its own utility. At equilibrium, $LMO_m$ requires a data contribution of $\zeta_m$. LMOs recruit workers for data collection to complete data demand, while each worker aims to maximize their own reward. For workers, they are unwilling to share information with each other because of privacy concerns. So we model their interactions as a multi-agent Markov process to capture their competitive dynamics. In this subsection, we will provide a detailed discussion on the budget range for LMOs and also give the solution for the multi-agent Markov process.
    
    In each training step $t$, for each $worker_m^i$ are considered to be an agent to make sequential decisions to maximize their utilities in a distributed manner. The states, actions, rewards of the multi-agent Markov decision process are formulated as follows.
        \subsubsection{State Space}
        In each training step $t$, all workers first sets the fatigue level $F^t= \left\{ f_1^t,f_2^t,...,f^t_{W_m}\right\}$ based on the historical data contribution (Eq.(5)). The proportion of the remaining budget $\tilde{B}_m^t$ will represent the normalized value of the resources available in the system. Then, the state space of workers can be defined as:
        \[
        S^t= \left\{ f_1^t,f_2^t,...,f^t_{W_m},\tilde{B}_m^t\right\}.
        \]
        \subsubsection{Action Space}
        In training step $t$, each worker decides whether they will continue to participate in the task. For $worker_i^t$, if they choose to continue, the data contribution is $d_i^t$, and it will follow a normal distribution. The action space of each $worker_i$ is $a^t_i=\{0,1\}$. Thus, the action space of the system is:
        \[
        A^t=\{a_1^t, a_2^t,...,a_{W_m}^t\}.
        \]
        \subsubsection{Reward Function}
        The reward for each workers will be calculated in two stages. In first stage, each agent receives a base reward:
        $$R_{base}^t=
        \begin{cases}
        \theta(1-\frac{\tilde{B}_m}{B_m}), & \text{$a_i^t=1$}\\
        \phi, & \text{$a_i^t=0$}
        \end{cases}$$
        where, $\phi$ is a negative parameter for punishment. In the second stage, for those agents who participate in the task will have a reward based on the data contribution level and reduce the parameter due to fatigue. Thus, the reward function is:
        $$R_{i}^t=
        \begin{cases}
        \frac{d_i^t}{\sum_{i\in W_m}d_m^i}\tilde{B}_m^t-f_i^t + R_{base}^t, & \text{$a_i^t=1$}\\
        \phi, & \text{$a_i^t=0$}
        \end{cases}$$
    \subsection{Multi-Agent Deep Reinforcement Learning Algorithm}
    Considering the characteristics of our environment: First, each agent possesses its own policy network, enabling decentralized decision-making during execution. Second, multiple processes asynchronous collect samples and update policies. Thus, the Asynchronous Proximal Policy Optimization (APPO)\cite{schulman_proximal_2017} algorithm is deployed to achieve the task.

    Specifically, in our scenario setting, each agent's reward depends on the behavior of other agents. For the workers, they all share a common reward pool. This means that all agents simultaneously exist in an environment with limited resources. Whenever an agent receives a reward, the overall budget decreases. Meanwhile, different workers have their own independent fatigue levels and data contribution amounts. This indicates that each agent possesses an independent state, meaning that agents need to make different decisions based on their individual states. Therefore, this is a typical multi-agent environment. Based on these characteristics, we choose the APPO algorithm as our solution, enabling each worker to fully consider both short-term and long-term benefits.

    APPO is an asynchronous variant of Proximal Policy Optimization (PPO) based on the IMPALA architecture. APPO employs a distributed architecture, consisting of Learners, Workers, and Parameter Servers. During training iterations, APPO asynchronously collects samples from the environment and then passes episode references to the Learner to achieve asynchronous model updates. Eq.(24) demonstrates the policy update objective function\cite{schulman_proximal_2017}.

    \begin{equation}
        L^{CLIP}(\theta) = \mathbb{E}_t[\min(r_t(\theta)\hat{A}_t, \text{clip}(r_t(\theta), 1-\epsilon, 1+\epsilon)\hat{A}_t)]
    \end{equation}
    
\subsection{Iterative process for system}
    We trains the workers to learn how to participate in tasks using the APPO algorithm in MDP process. However, there is often a discrepancy between the actual data collected by workers and the theoretically optimal value. To bridge this gap, it is necessary to determine the optimal budget $B_m^*$ required to achieve the theoretical optimal value $\zeta_m^*$. To address this issue, we design an Optimal Budget Search Algorithm (OBSA):
    \begin{algorithm}
\caption{Optimal Budget Search Algorithm (OBSA)}
\label{alg:OBSA}
\KwIn{$W_m,\  \tau,\ \zeta_m$ }
\KwOut{$B_m^*$ }
$B_{\text{min}} = 0$\ $,$ $B_{\text{max}} = \tau$\;
\Repeat{$|\sum_{i\in W_m}^{}d_m^i-\zeta_m| < \varsigma $}{
    $B_m^* = (B_{min}+B_{max})/2$\;
    $\text{Multi-Agent MDP}(B_m^*, W_m)$\;
    \eIf{$\sum_{i\in W_m}^{}d_m^i > \zeta_m$}{
        $B_{\text{max}} = B_m^*$\;
    }{
        $B_{\text{min}} = B_m^*$\;
    }
}
\Return $B_m^*$,${\sum_{i\in W_m}^{}R_i}$, ${\sum_{i\in W_m}^{}d_m^i}$\;
\end{algorithm}
    After $LMO_m$ computes the theoretically optimal strategy, it verifies the strategy's feasibility through OBSA. Specifically, when the $[0,\tau]$ is established as the binary search range (Line 1), the actual data contribution can be calculated by Multi-Agent Markov Decision Process. We approach the theoretical optimal value of the LMO strategy by continuously adjusting LMO's actual budget $B_m^*$ (Line 3-9), and OBSA ultimately determines the actual budget required to complete strategy $\zeta_m$, the total data contribution and the total reward for all workers.
    
    Then, we design the Algorithm 2: Adaptively Searching the Optimal Strategy Algorithm (ASOSA) to address the coupling problem among the three layers for incentive mechanisms of FL.

The purpose of ASOSA is to help the TP, LMOs, and workers maximize their utility. During the ASOSA process, each iteration consists of two phases: TP and LMOs compute the theoretically optimal strategy at the upper layer, while LMOs and workers update parameters based on actual completion at the lower layer. The final result is that the strategies of TP, LMOs, and workers converge to stability.

First, we initialize the number of participating LMOs and the unit data purchase cost C (Lines 1-2). Then, the LMOs broadcast $P_m$ to the TP. The TP aggregates all unit data purchase costs and calculates the theoretically optimal strategy through Eq.(19) (Lines 4-6). If the TP's theoretically optimal strategy calculation yields a non-positive value, the entire game terminates immediately. This indicates that the current LMOs' capabilities are insufficient to complete the training task. The game continues only when the TP's theoretically optimal strategy is greater than zero, and the TP will broadcast its optimal strategy to each LMO (Lines 7-10). Through Eq.(20), the theoretically optimal strategy $\zeta_m$ for each LMO can be calculated. If $\zeta_m$ is less than or equal to zero, the LMO exits the game (Lines 12-14). Through Eq.(22), the LMO can calculate the theoretically optimal budget $B_m$. We use the OBSA algorithm to determine the actual budget $B_m^*$ required for the LMO to complete the current optimal strategy $\zeta_m$, the actual total expenditure $R_{w_m}^{}$, and the actual total data collection amount $d_{w_m}^{}$. Then, we update the unit data purchase cost $P_m$ for $LMO_m$ (Lines 16-18). The entire cycle ends only when all LMOs' strategies stabilize. Finally, each LMO conducts FL training according to its strategy and submits the results to the TP. Notably, the ASOSA algorithm allows the TP and LMOs' utility to approach the theoretical values calculated by the Stackelberg game. For workers, they can maximize their utility through MDP.
\begin{algorithm}
\caption{Adaptively Searching the Optimal Strategy Algorithm (ASOSA)}
\label{alg:ASOSA}
Initialize the number of LMOs $M$\;
Initialize the data purchase cost: $P_{\text{m}}$ for all LMOs\;
\Repeat{No LMOs change their strategy}{
    \For{each ${LMO}_m$}{
        Broadcast $P_{\text{m}}$ to the TP\;
    }
    Determine the optimal $\tau$ based on Eq.(19)\;
    \eIf{$\tau < 0$}{
        Terminate the game\;
    }{
        TP broadcasts $\tau$ to all LMOs\;
    }
    \For{$each\ LMO_m$}{
        Determine the optimal $\zeta_m$ based on Eq.(20)\;
        \eIf{$\zeta_m \leq 0$}{
        $LMO_m$ quit the game\;
        }{
        Determine the optimal $B_m$ based on Eq.(22)\;
        $B_m^*,R_{w_m}^{}, d_{w_m}^{total} = \text{OBSA}(W_m, \tau, \zeta_m)$\;
        $P_m= {R_{w_m}^{}}/{ d_{w_m}^{total}}$\;
        }
    }
}
All LMOs in $M$ train the FL model and submit results\;
\end{algorithm}

\section{Performance Analysis}
\subsection{Experimental Setup}
We have conducted experiments on the Fashion-MNIST\cite{xiao2017fashion} and EMNIST\cite{cohen2017emnist} datasets. These two datasets are used to evaluate the performance of the ASOSA. For the experimental setup, We randomly selected the number of workers for each LMO from a range of 5 to 20. According to previous studies, the parameters are configured as: $ \epsilon =0.1,\gamma =-10.0,\delta =0.5,E_m=4$. For MNIST, we utilized a two-layer neural network with 12 neurons in the hidden layer and 10 neurons in the output layer. For Fashion-MNIST, networks consist of two convolutional layers and two fully connected layers.
\subsection{Parameter Analysis}

\begin{figure}[htbp]    % 常规操作\begin{figure}开头说明插入图片
% 后面跟着的[htbp]是图片在文档中放置的位置，也称为浮动体的位置，关于这个我们后面的文章会聊聊，现在不管，照写就是了
  \centering            % 前面说过，图片放置在中间
  \subfloat[]   % 第一张子图的下标（注意：注释要写在[]中括号内）
  {
      \label{fig:subfig1}\includegraphics[width=0.38\textwidth]{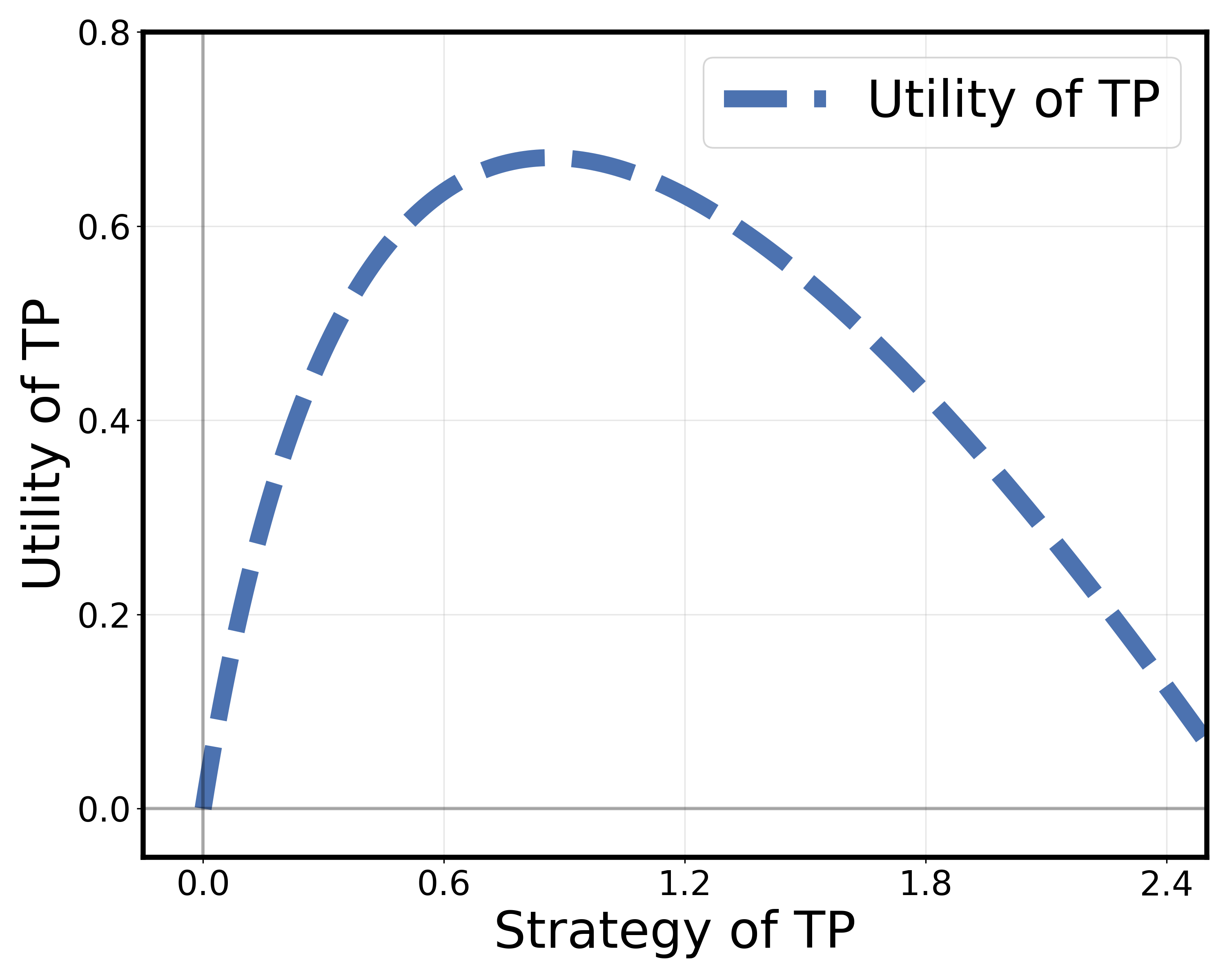}
  }
  \subfloat[]
  {
      \label{fig:subfig2}\includegraphics[width=0.38\textwidth]{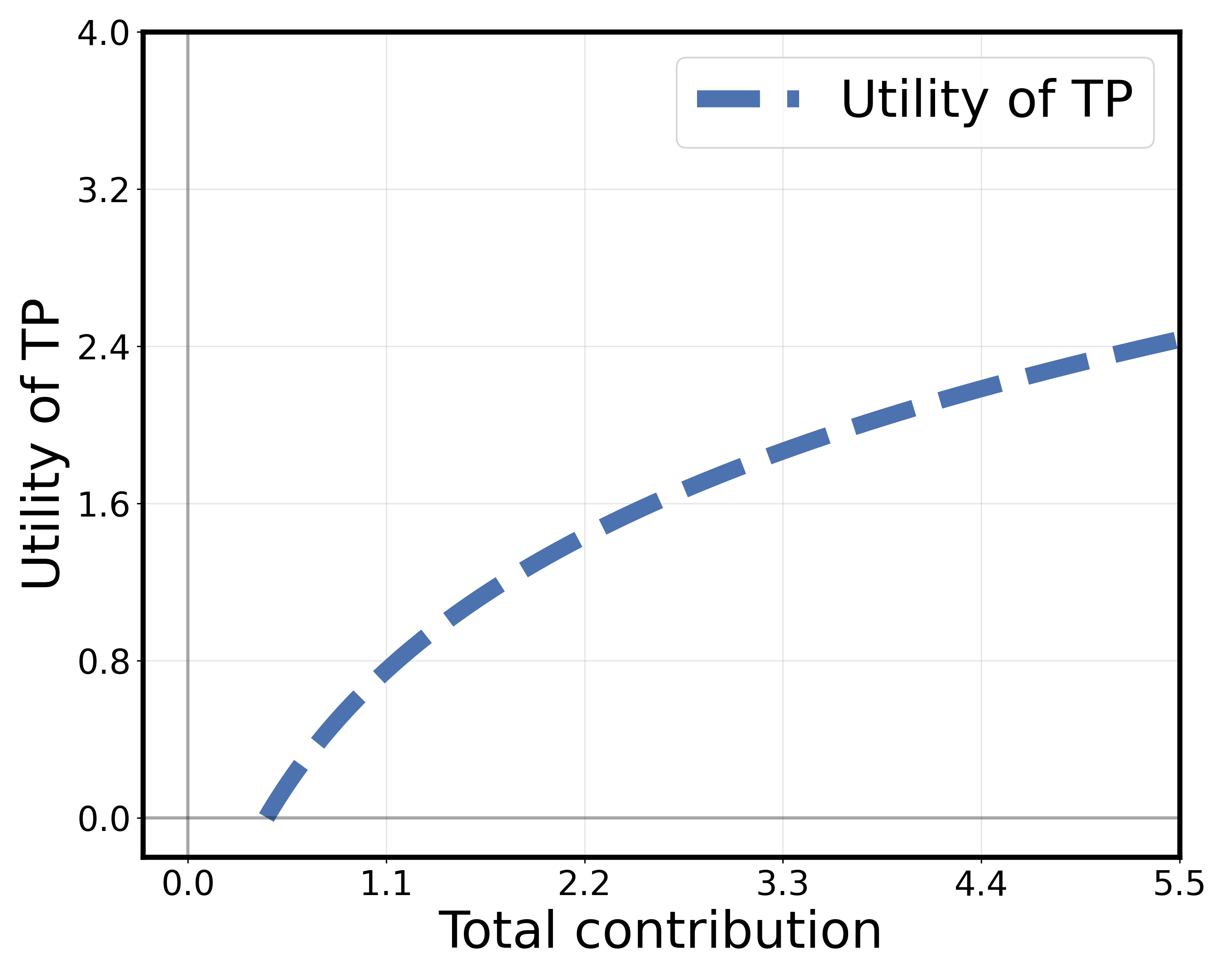}
  }
  \caption{Parameter influencing the utility of TP. (a) The strategy of TP. (b) The total contribution of LMO}    % 整个图片的说明，注释写在{}内
  \label{fig:subfig_1}            % 整个图片的标签编号，注意这里跟子图是一样的道理，标签不能重复 
\end{figure}
\begin{figure}[htbp]    % 常规操作\begin{figure}开头说明插入图片
% 后面跟着的[htbp]是图片在文档中放置的位置，也称为浮动体的位置，关于这个我们后面的文章会聊聊，现在不管，照写就是了
  \centering            % 前面说过，图片放置在中间
  \subfloat[]   % 第一张子图的下标（注意：注释要写在[]中括号内）
{
      \label{fig:subfig1}\includegraphics[width=0.4\textwidth]{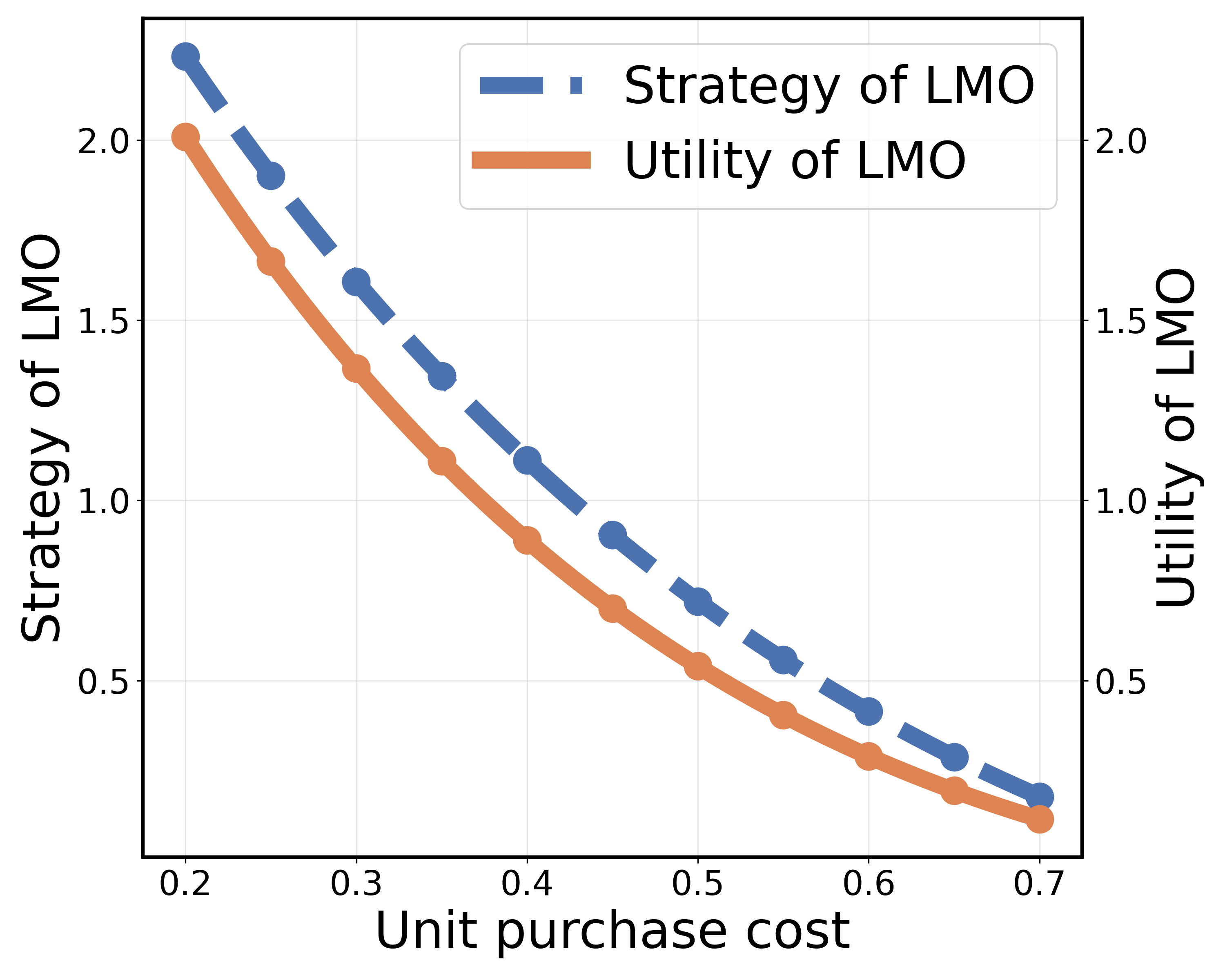}

  }
  \subfloat[]
  {
      \label{fig:subfig2}\includegraphics[width=0.4\textwidth]{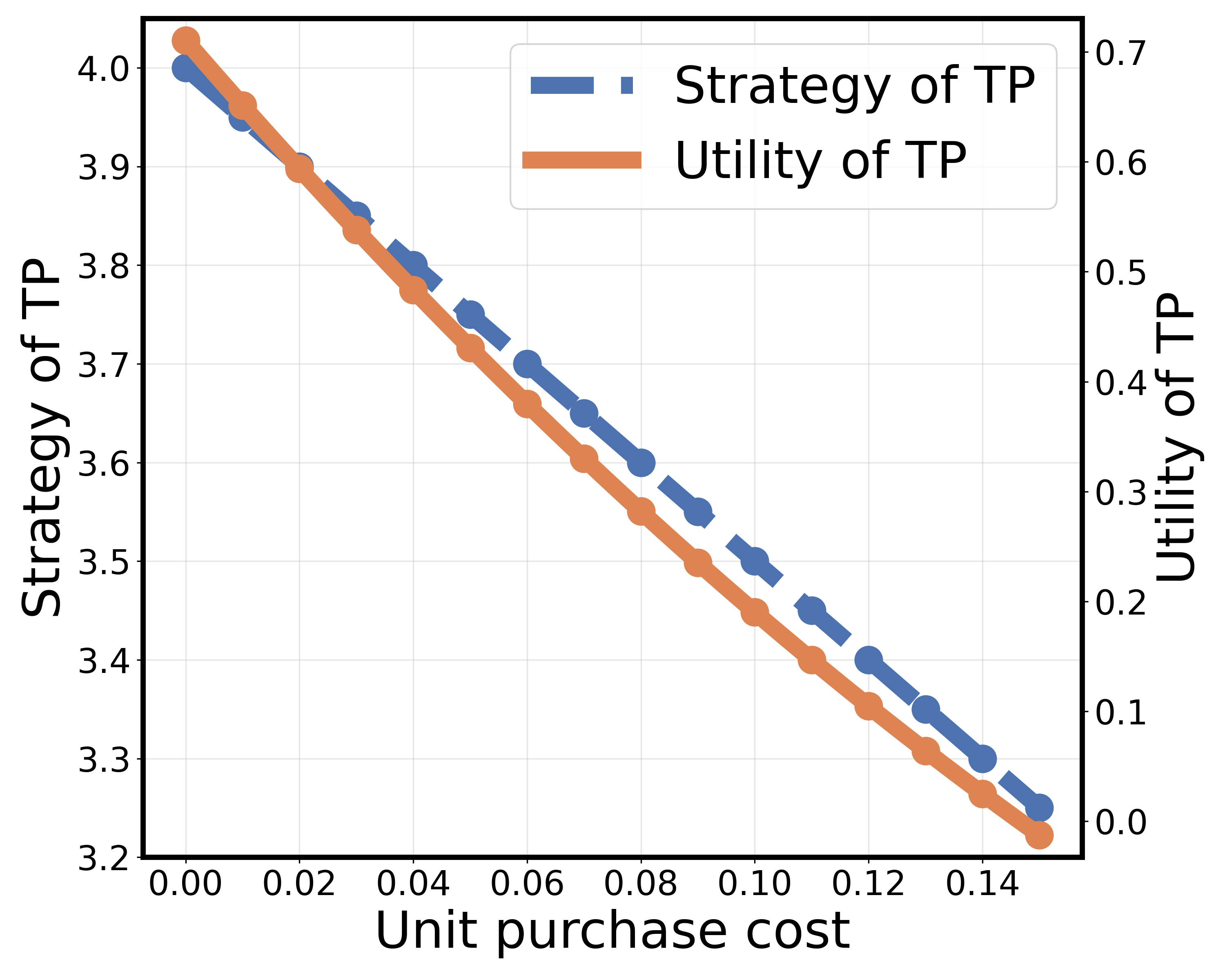}
  }
  \caption{The impact of unit purchase cost. (a) Strategy and utility of LMO. (b) Strategy and utility of TP}    % 整个图片的说明，注释写在{}内
  \label{fig:subfig_1}            % 整个图片的标签编号，注意这里跟子图是一样的道理，标签不能重复 
\end{figure}
For utility of TP, Fig.2 shows the impact of performance when varying the strategy of TP and total contribution of LMOs. In Fig.2(a), when the strategy of TP is decided, all LMOs will determine their optimal strategy, so that we have the utility of the TP under the different strategy of TP. The utility of TP first increases and then decreases, demonstrating clearly that this function possesses a unique maximum point. The maximum point attained by the utility of TP is derived in Theorem 3. This demonstrates that in the Stackelberg game between TP and LMO, TP is able to identify the equilibrium point. In the Fig.2(b), when the TP strategy is determined, the utility of TP is a convex function as the total amount of data increases. This trend corresponds to the accuracy observed in real-world datasets.

Subsequently, we examine the impact of unit data purchase cost on the system. We configure four LMOs, each with an initial unit data acquisition cost of 0.2. We then adjust the unit data purchase cost for one of the LMOs and observe the resulting strategy and utility. From Fig.3(a), we observe that as the unit data purchase cost increases, both the strategy and utility of the LMO decline. This indicates that higher unit data purchase costs intensify the cost pressure on the LMO, thereby reducing the LMO's optimal data collection strategy. Concurrently, this reduction in the optimal data collection strategy directly leads to a decrease in the budget of the LMO, consequently diminishing the utility of the LMO. In Fig.3(b), the experiments demonstrate that an increase in data acquisition costs leads to a decline in both the utility and strategy of the TP. This indicates that the advantage of reduced expenditure for the TP resulting from decreased data contributions is significantly outweighed by the negative impact of diminished model performance benefits.
\subsection{Multi-Agent DRL}
\begin{figure}[htbp]    % 常规操作\begin{figure}开头说明插入图片
% 后面跟着的[htbp]是图片在文档中放置的位置，也称为浮动体的位置，关于这个我们后面的文章会聊聊，现在不管，照写就是了
  \centering            % 前面说过，图片放置在中间
  \subfloat[]   % 第一张子图的下标（注意：注释要写在[]中括号内）
  {
      \label{fig:subfig1}\includegraphics[width=0.4\textwidth]{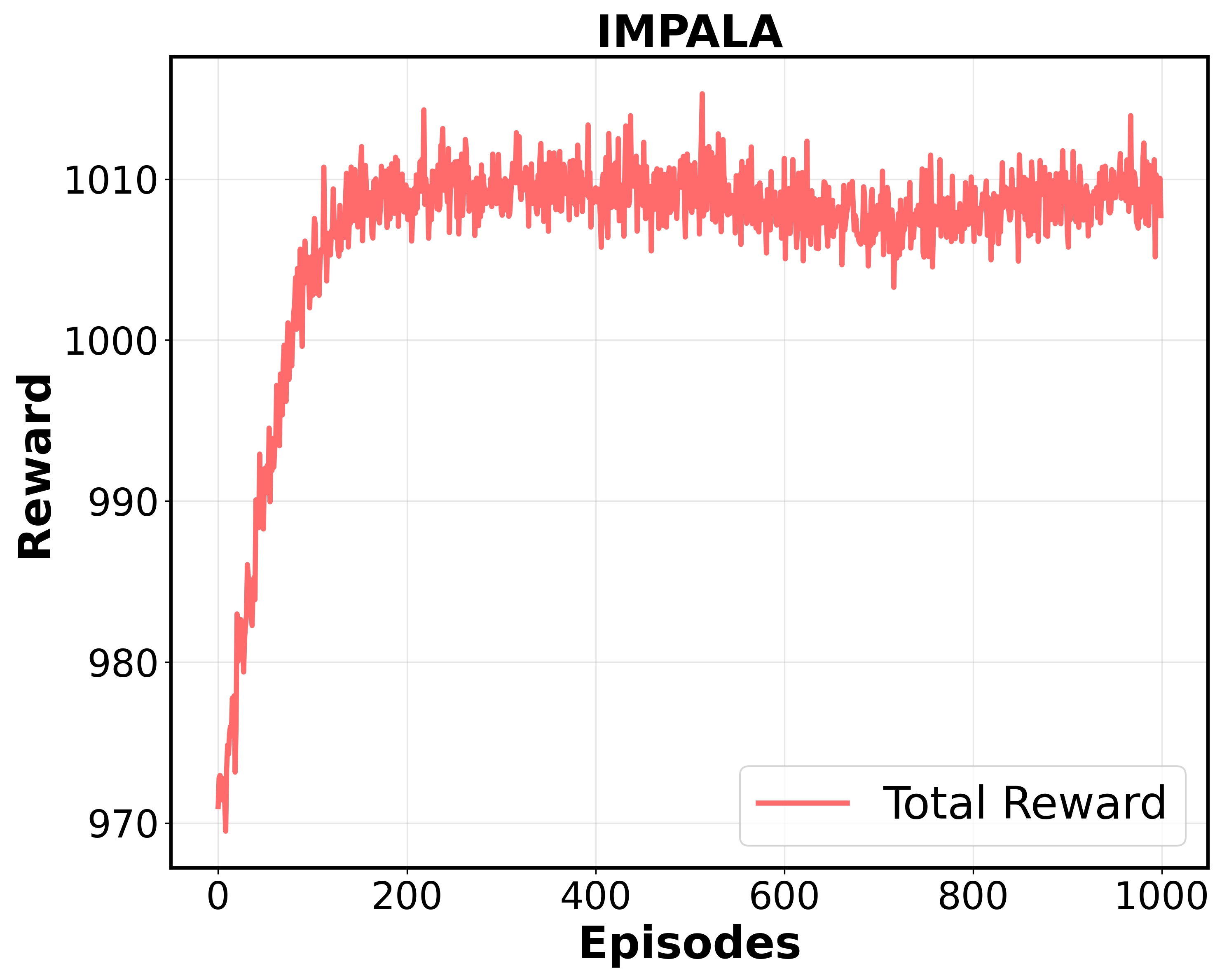}
  }
  \subfloat[]
  {
      \label{fig:subfig2}\includegraphics[width=0.4\textwidth]{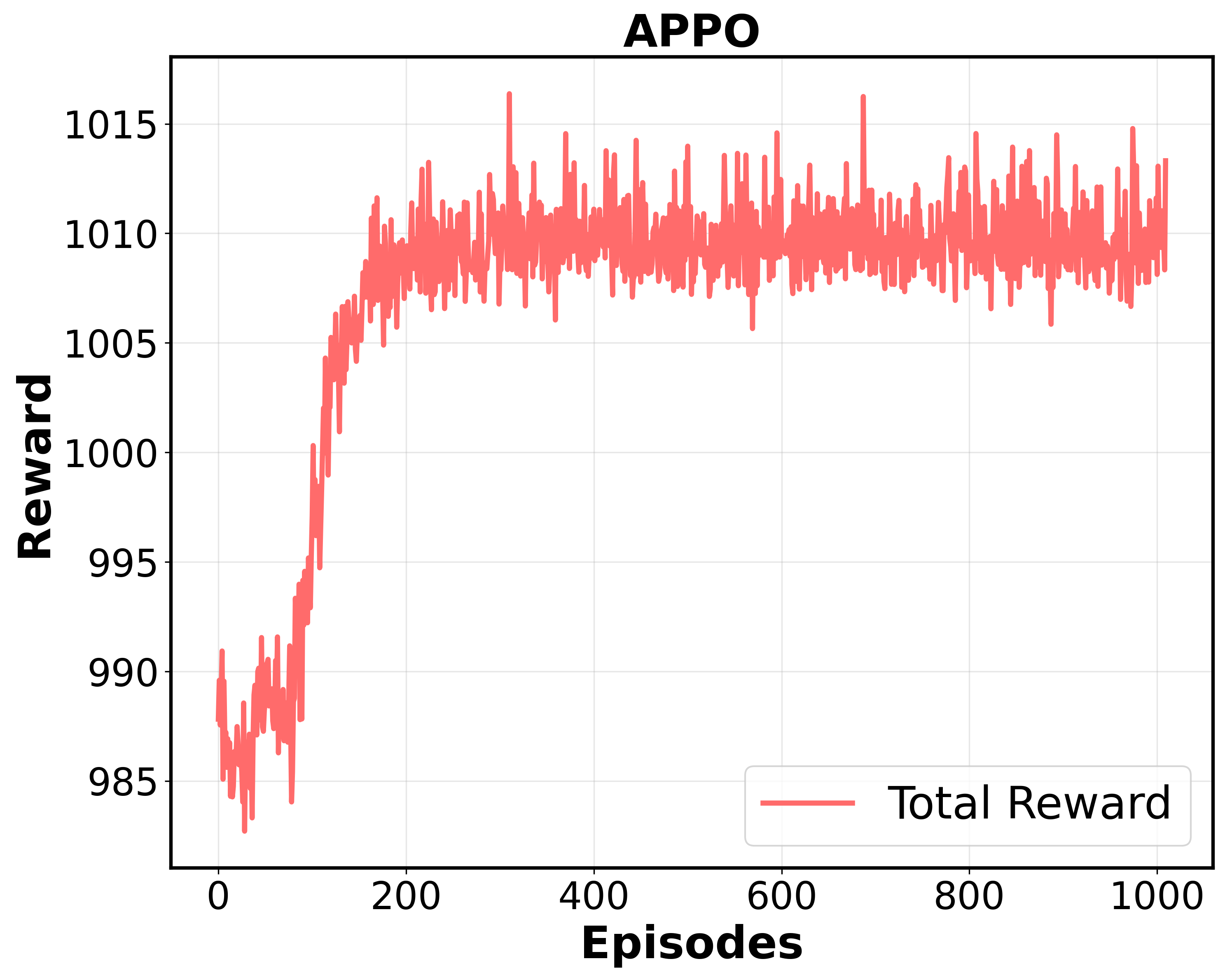}
  }
  \caption{Training rewards (a) IMPALA. (b) APPO.}    % 整个图片的说明，注释写在{}内
  \label{fig:subfig_1}            % 整个图片的标签编号，注意这里跟子图是一样的道理，标签不能重复 
\end{figure}
We train the model following the method described in Section 5, within a customized environment where the number of workers is set to 20. The objective is to find the optimal policy through iterative training over 1,000 episodes. As shown in Fig. 4, the reward curves indicate that the DRL algorithms successfully converged to an optimal and stable policy after sufficient iterations. In addition to APPO, we also experimented with IMPALA as a comparative baseline. The policy learned by APPO slightly outperformed that of IMPALA. Therefore, in the subsequent experiments, we adopt the model trained using APPO.

\subsection{Convergence Analysis}
\begin{figure*}
  \centering
  \begin{subfigure}{0.34\linewidth} % 每个小图占总宽度的30%
    \includegraphics[width=0.98\linewidth]{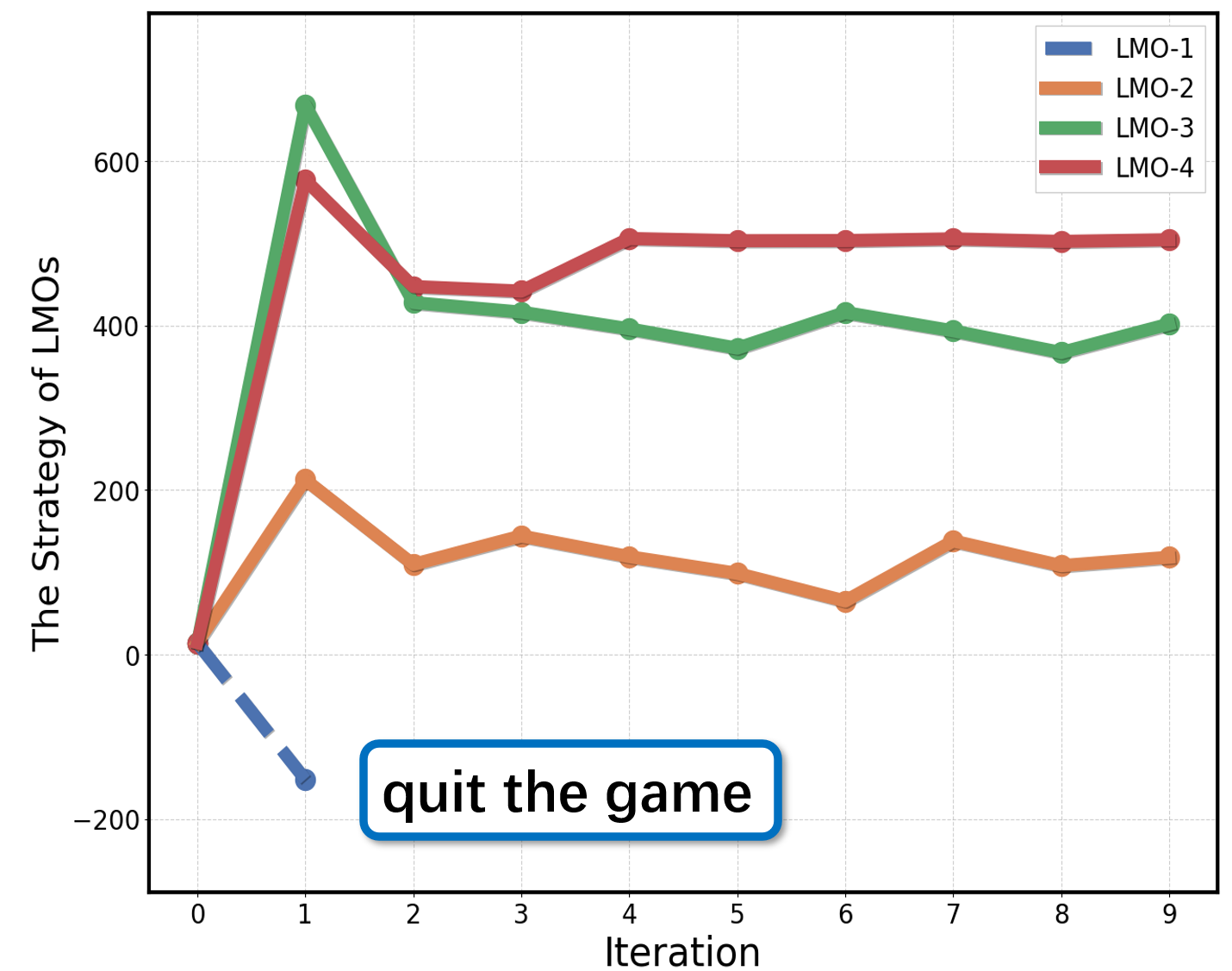} % 替换成第一个小图片的文件名
    \caption{}
    \label{fig:subfig1}
  \end{subfigure}%
  \begin{subfigure}{0.34\linewidth}
    \includegraphics[width=0.98\linewidth]{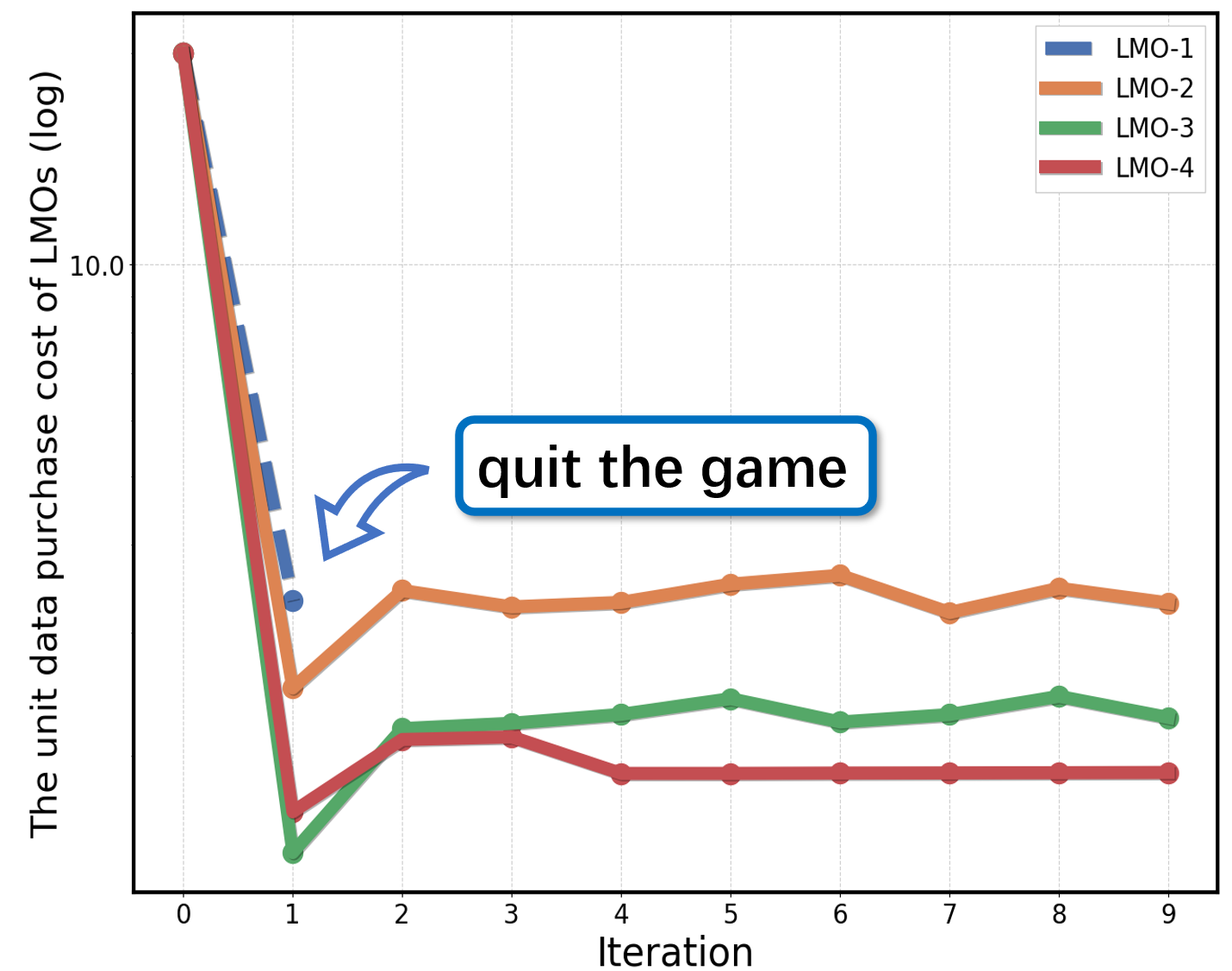} % 替换成第二个小图片的文件名
    \caption{}
    \label{fig:subfig2}
  \end{subfigure}%
  \begin{subfigure}{0.335\linewidth}
    \includegraphics[width=0.98\linewidth]{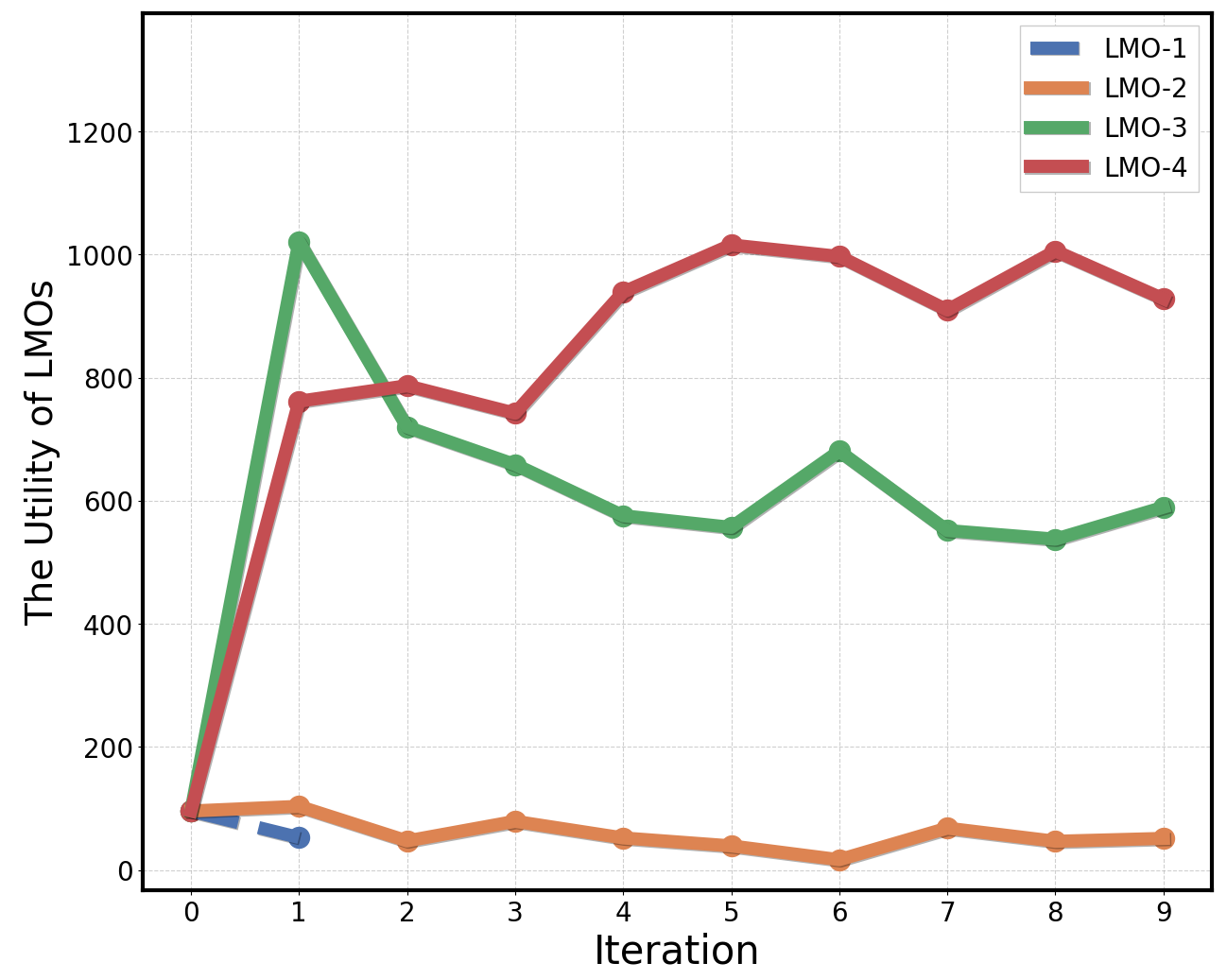} % 替换成第三个小图片的文件名
    \caption{}
    \label{fig:subfig3}
  \end{subfigure}
  \caption{(a) The strategy of LMOs during iterations. (b) The unit data purchase cost of LMOs during iterations. (c) The utility of LMOs during iterations.}
  \label{fig:wholefig}
\end{figure*}

Fig.5(a) shows the changing of strategies of different LMO, Fig.5(b) shows the unit data purchase cost of LMOs during iterations (we employed the log scale to magnify the details). Fig.5(c) shows the utility of LMOs during iterations. The strategy and utility of TP is shown in the Fig.6. In Fig.4(a), after ten iterations, we can clearly observe that each LMO has converged to a stable strategy.\\
\begin{figure}[htbp]    % 常规操作\begin{figure}开头说明插入图片
  \centering            % 前面说过，图片放置在中间
{
      \label{fig:subfig1}\includegraphics[width=0.6\textwidth]{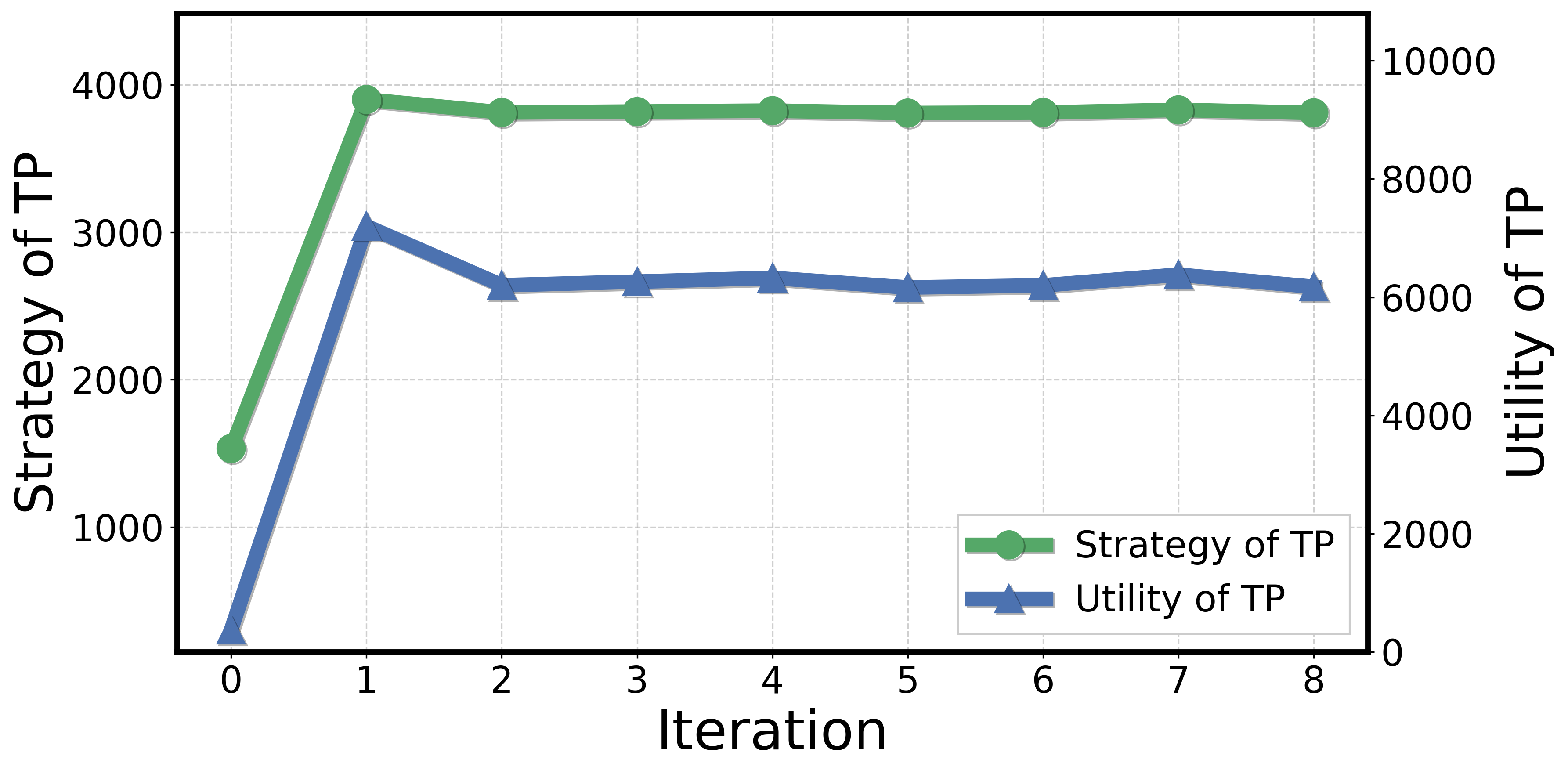}
  }
  \caption{The strategy and utility of TP during iterations}    
  \label{fig:subfig_1}            
\end{figure}

Notably, at the conclusion of the first round, LMO-1 exhibited a negative strategy value. This indicates that LMO-1 was unsuitable for this game and was consequently eliminated. Additionally, LMO-2, LMO-3, and LMO-4 all demonstrated a significant decrease in strategy values during the third round compared to the first round. This occurred because the strategies updated in the first round exceeded the maximum data collection capacity of the workers, resulting in rapid adjustments during the subsequent iteration of ASOSA. For the same reason, the utility of TP briefly increased in the first round before returning to normal levels in Fig.6. These data demonstrate that our algorithm can rapidly adjust strategies according to the characteristics of different LMOs. It ultimately converges quickly to stable strategies.

\subsection{Comparison Experiment}
In this section, we use two benchmark schemes to evaluate the performance in the upper-layer.
\begin{itemize}
	\item Fixed pricing scheme: This scheme implies that the LMO needs to pay the same unit data price for all workers. It is worth noting that the unit data acquisition cost must satisfy the value range determined in the previous analysis. In this section, we set it to 10.
	\item Random pricing scheme: This scheme means to use random number as the price of unit data. For all LMOs, we set this number in the range of $(0,10)$.
\end{itemize}

\begin{figure*}
  \centering
  \begin{subfigure}{0.34\linewidth} % 每个小图占总宽度的30%
    \includegraphics[width=0.98\linewidth]{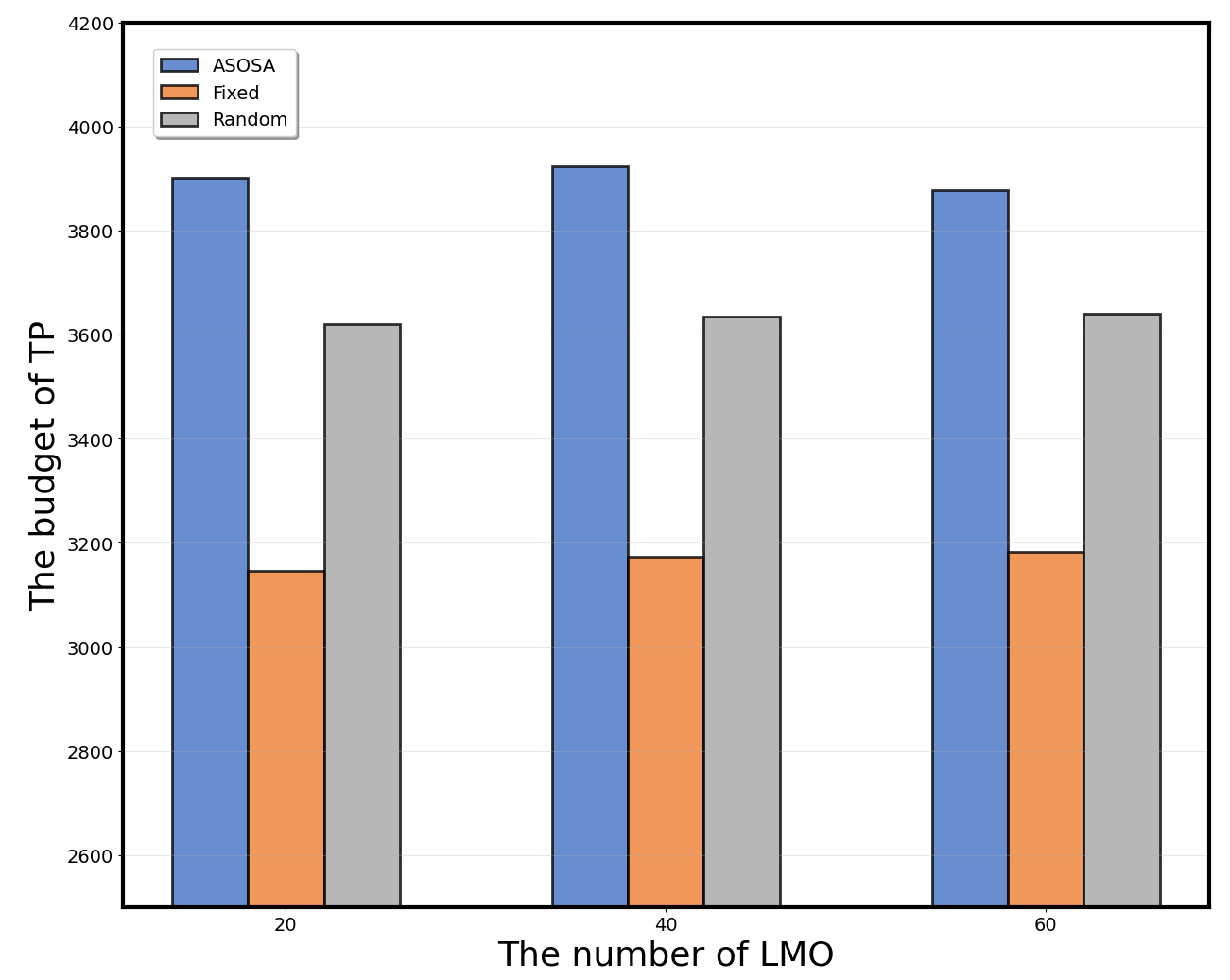} % 替换成第一个小图片的文件名
    \caption{}
    \label{fig:subfig1}
  \end{subfigure}%
  \begin{subfigure}{0.34\linewidth}
    \includegraphics[width=0.98\linewidth]{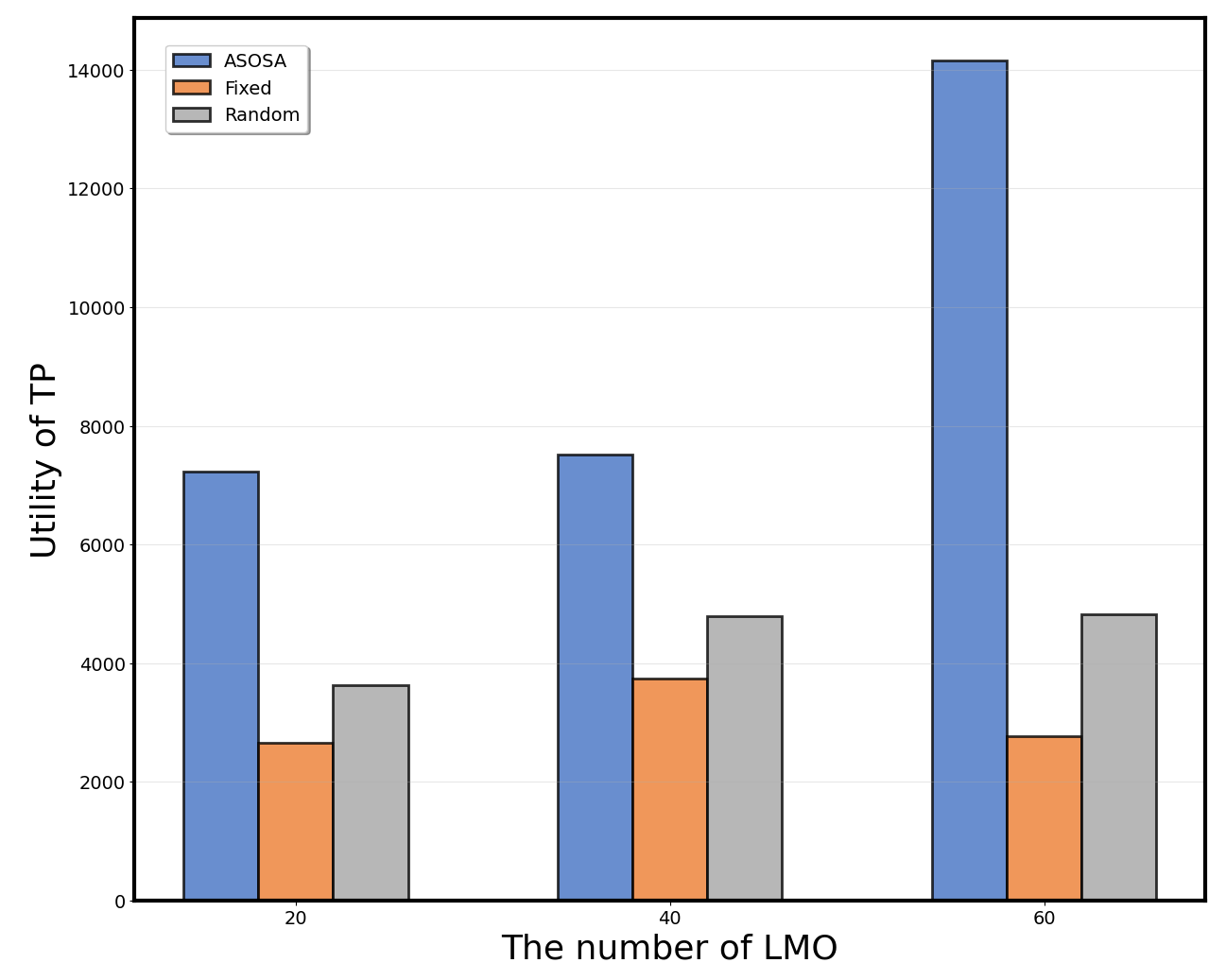} % 替换成第二个小图片的文件名
    \caption{}
    \label{fig:subfig2}
  \end{subfigure}%
  \begin{subfigure}{0.34\linewidth}
    \includegraphics[width=0.98\linewidth]{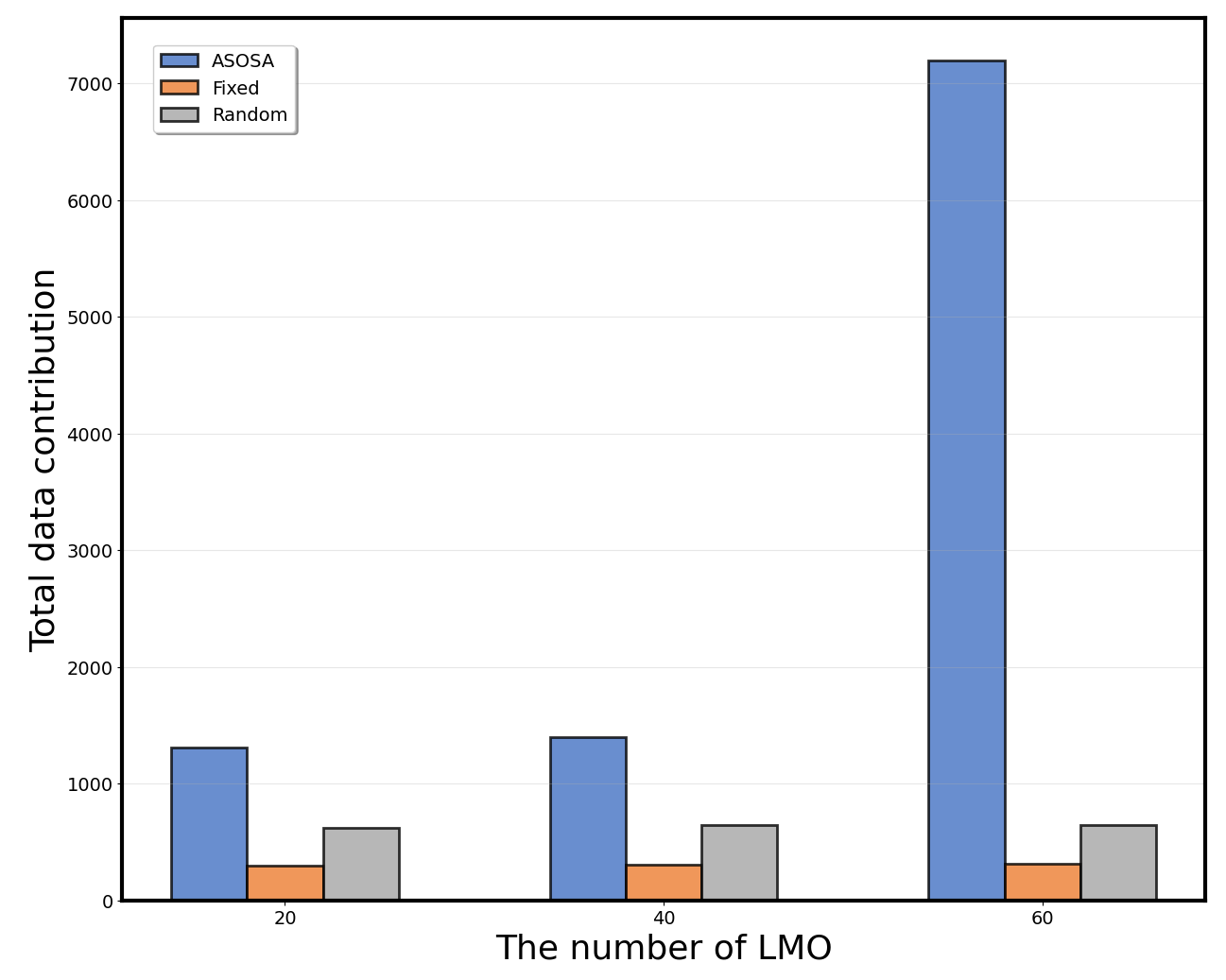} % 替换成第三个小图片的文件名
    \caption{}
    \label{fig:subfig3}
  \end{subfigure}
  \caption{(a) The budget of TP. (b) The utility of TP. (c) Total data contribution of LMOs.}
  \label{fig:wholefig}
\end{figure*}

Fig.7(a) shows the total budget of the TP when the strategy stabilizes under different algorithms. Fig.7(b) demonstrates how different algorithms affect the TP's utility. Fig.7(c) shows the impact of total data contribution $\mathbb{Z}$ by using different algorithms. We can clearly observe that as the number of LMOs increases, the TP's budget does not increase significantly but both the TP's utility and the total amount of data collected by the LMOs are significantly improved. This indicates that ASOSA can effectively find strategies that achieve optimal utility across various scenarios without significantly increasing the TP's budget. Meanwhile, the utility achieved through ASOSA substantially exceeds that of both random pricing strategies and fixed pricing strategies.

\section{Conclusion}
In our research, we resolve the coupling problem between theoretical optimal solutions in the upper layer and actual values in the lower layer. Through iterations of unit data acquisition cost, we develop the ASOSA algorithm to adaptively adjust strategies based on the characteristics of different LMOs. Ultimately, ASOSA demonstrates excellent stability in experiments. It is able to iterate towards optimal strategies under various conditions, significantly outperforming other baseline pricing strategies. 

\bibliography{Writing.bib}

\end{document}